\documentclass[sigconf,nonacm]{acmart}
\setcopyright{none}

\renewcommand\footnotetextcopyrightpermission[1]{}
\renewcommand{\shortauthors}{Jiang et al.}
\AtBeginDocument{%
  }

\usepackage{thmtools}
\usepackage{thm-restate}
\usepackage{subcaption}
\usepackage[utf8]{inputenc} 
\usepackage[T1]{fontenc}    
\hypersetup{
  colorlinks=true,
  linkcolor=red,
  citecolor=blue,
  filecolor=blue,
  urlcolor=blue
}

\usepackage{url}

\usepackage{amsthm}
\usepackage{mathtools}
\usepackage[ruled,vlined]{algorithm2e}
\usepackage{algpseudocode}
\usepackage[x11names,svgnames,dvipsnames,table]{xcolor}
\usepackage{subcaption}
\usepackage{thmtools, thm-restate}
\usepackage{cleveref} 
\usepackage{algpseudocode}
\usepackage{multirow}
\usepackage{enumitem}

\usepackage{pifont}    
\usepackage{booktabs}
\usepackage{wrapfig} 
\usepackage{setspace} 
\usepackage{booktabs}   
\usepackage{tabularx}   
\usepackage{ragged2e}   
\usepackage{xstring}  
\usepackage{collcell} 
\usepackage{graphicx}
\usepackage{caption}
\usepackage{amsthm}
\newtheorem{remark}{Remark}
\usepackage{enumitem}
\usepackage{booktabs,multirow}
\usepackage{siunitx}
\usepackage{adjustbox}

\newcommand{\cmark}{\ding{51}} 
\newcommand{\xmark}{\ding{55}} 

\newcommand{\xb}{\mathbf{x}}
\newcommand{\ub}{\mathbf{u}}
\newcommand{\zb}{\mathbf{z}}
\newcommand{\mub}{\boldsymbol{\mu}}
\newcommand{\ellb}{\boldsymbol{\ell}}

\newtheorem{assumption}{Assumption}[section]
\newtheorem{definition}{Definition}[section]
\newtheorem{theorem}{Theorem}[section]

\definecolor{yellow}{HTML}{FFD966} 
\definecolor{perturbation}{HTML}{EA9A90}
\definecolor{obs}{HTML}{7789B7}

\crefname{section}{\S}{\S\S}
\crefname{equation}{Eq.}{Equations}
\crefname{appendix}{App.}{Apps.}
\crefname{thm}{Thm.}{Thms.}
\crefname{cor}{Cor.}{Cors.}
\crefname{prop}{Prop.}{Props.}
\crefname{asm}{Asm.}{Asms.}
\crefname{defn}{Defn.}{Defns.}
\crefname{lemma}{Lem.}{Lems.}
\crefname{exm}{Ex.}{Exs.}
\crefname{clar}{Clar.}{Clars.}

\begin{document}



\title{Learning Latent Dynamical Causal Processes for Single-Cell Perturbation Prediction}
\author{Wenkang Jiang}
\email{wenkang.jiang@adelaide.edu.au}
\affiliation{%
  \institution{AIML, Adelaide University}
  \city{Adelaide}
  \country{Australia}
}

\author{Yuhang Liu}
\authornote{Corresponding author.}
\email{yuhang.liu01@adelaide.edu.au}
\affiliation{%
\institution{Responsible AI Research Centre}
  \institution{AIML, Adelaide University}
  \city{Adelaide}
  \country{Australia}
}

\author{Erdun Gao}
\email{erdun.gao@adelaide.edu.au}
\affiliation{%
\institution{Responsible AI Research Centre}
  \institution{AIML, Adelaide University}
  \city{Adelaide}
  \country{Australia}
}

\author{Ehsan Abbasnejad}
\email{ehsan.abbasnejad@monash.edu}
\affiliation{%
  \institution{Monash University}
  \city{Melbourne}
  \country{Australia}
}

\author{Lina Yao}
\email{lina.yao@unsw.edu.au}
\affiliation{%
  \institution{University of New South Wales}
  \institution{CSIRO’s Data 61}
  \city{Sydney}
  \country{Australia}
}

\author{Javen Qinfeng Shi}
\email{javen.shi@adelaide.edu.au}
\affiliation{%
\institution{Responsible AI Research Centre}
  \institution{AIML, Adelaide University}
  \city{Adelaide}
  \country{Australia}
}

\renewcommand{\shortauthors}{Jiang et al.}

\begin{abstract}
Single-cell perturbation prediction aims to predict how cells would respond to unseen perturbations and achieve out-of-distribution (OOD) generalization, with the goal of understanding how perturbations reshape underlying cellular programs over time. While recent machine learning methods have made progress on this task, most existing approaches focus on only one aspect of perturbation responses. Some methods emphasize learning latent causal mechanisms for generalization, but treat responses as static outcomes and ignore how gene expression evolves over time. Other methods focus on modeling temporal changes in gene expression, but do not explicitly capture the underlying causal generative mechanisms that drive these changes. In reality, however, perturbation effects exhibit two inseparable characteristics: they are \textit{latent}, in that perturbations act through unobserved cellular programs, and \textit{dynamical}, in that the states of these programs evolve over time to generate the observed gene expression profiles. Motivated by this observation, we propose a latent dynamical causal generative model for single-cell perturbation data that jointly captures latent cellular programs and their temporal evolution. We further provide a theoretical analysis showing that, under suitable conditions, the latent causal variables are recoverable up to a trivial equivalence class. Guided by this identifiability analysis, we develop a learning framework for recovering \emph{latent cellular programs and their dynamical evolution} from
single-cell sequencing data. Experiments on Causal-3DIdent, a widely adopted benchmark in causal representation learning, validate the theoretical results and the effectiveness of the proposed method. Additional experiments on real-world CRISPR-based single-cell sequencing perturbation data demonstrate improved generalization to unseen perturbations compared to state-of-the-art baselines, highlighting the robustness of the proposed method.
\end{abstract} 

\begin{CCSXML}
<ccs2012>
 <concept>
  <concept_id>00000000.0000000.0000000</concept_id>
  <concept_desc>Do Not Use This Code, Generate the Correct Terms for Your Paper</concept_desc>
  <concept_significance>500</concept_significance>
 </concept>
 <concept>
  <concept_id>00000000.00000000.00000000</concept_id>
  <concept_desc>Do Not Use This Code, Generate the Correct Terms for Your Paper</concept_desc>
  <concept_significance>300</concept_significance>
 </concept>
 <concept>
  <concept_id>00000000.00000000.00000000</concept_id>
  <concept_desc>Do Not Use This Code, Generate the Correct Terms for Your Paper</concept_desc>
  <concept_significance>100</concept_significance>
 </concept>
 <concept>
  <concept_id>00000000.00000000.00000000</concept_id>
  <concept_desc>Do Not Use This Code, Generate the Correct Terms for Your Paper</concept_desc>
  <concept_significance>100</concept_significance>
 </concept>
</ccs2012>
\end{CCSXML}

\ccsdesc[500]{Computing methodologies~Machine learning approaches}
\ccsdesc[500]{Applied computing~Bioinformatics}

\keywords{Causal Representation Learning; Single Cell Perturbation; Latent Dynamic Process; Identifiability Analysis}


\maketitle

\section{Introduction}
\label{sec:introduction}

Single-cell perturbation prediction has emerged as a central computational task in the analysis of high-throughput perturbation screens, which aim to model how genetic or chemical interventions reshape cellular states at single-cell resolution. The advent of technologies such as Perturb-seq and related CRISPR-based pooled assays has enabled the collection of large-scale, high-dimensional single-cell perturbation datasets, making it possible to train machine learning models that predict perturbation responses, with important downstream implications for guiding experimental design in functional genomics and drug discovery \citep{norman2019exploring,replogle2020combinatorial}. A central challenge in this task is to reliably infer cellular responses under unmeasured or unseen perturbations, for example, generalizing from models trained on single-perturbation data to unseen perturbation settings~~\citep{tejada2025causal}.

\textbf{Existing Work.} Existing machine learning approaches to single-cell perturbation prediction broadly fall into two distinct directions: leveraging latent causal models and leveraging temporal information (See~\Cref{app:relatedwork} for a detailed discussion). Latent causal models aim to learn latent causal mechanisms that support generalization and interpretability \citep{yang2021causalvae, lachapelle2022disentanglement,zhang2023identifiability,lopez2022learning,de2025interpretable,liu2022identifying,liu2023identifiable,liu2024towards,liu2025latent,liu2026beyond,liu2026i}. These methods typically employ latent causal generative models to capture high-level variables governing gene expression and recover them from observational and perturbed data, a paradigm commonly referred to as causal representation learning \citep{scholkopf2021toward}. Temporal approaches focus on modeling how gene expression changes over time from measurements collected at multiple discrete timepoints, including gene regulatory network inference and ODE-based dynamical systems \citep{ishikawa2023renge,von2025learning,zhang2024scnode}. Such models often operate directly in the observed gene expression space, or introduce latent variables primarily to facilitate dynamical fitting only, without an emphasis on latent generative processes.

\textbf{Challenges.} However, both lines above overlook a key aspect of single-cell perturbation responses: the effect of a perturbation unfolds as a \textit{latent dynamical generative process}, characterized by two inseparable aspects: \textit{latent} and \textit{dynamical}. In practice, \textit{Latent} refers to the fact that perturbations do not act directly on observed gene expression, but instead operate through latent generative processes that capture how genetic or chemical interventions influence expression patterns \citep{scholkopf2021toward,lachapelle2022disentanglement,zhang2023identifiability}. \textit{Dynamical} reflects that the states of these latent processes evolve over time following the perturbation, with early changes propagating through regulatory processes and giving rise to stage-specific gene expression profiles observed at different timepoints \citep{ishikawa2023renge,zhang2024scnode,von2025learning, jiang2026makes}. 

\textbf{Implications.} As a consequence, existing approaches tend to capture only one aspect of this generative process at a time. Specifically, methods based on latent causal modeling \citep{lachapelle2022disentanglement,zhang2023identifiability,lopez2022learning,de2025interpretable} emphasize recovering interpretable latent mechanisms, but typically abstract away the temporal evolution through which perturbation effects unfold, treating responses as static or snapshot-based outcomes. Conversely, temporal modeling approaches \citep{ishikawa2023renge,von2025learning,zhang2024scnode} focus on fitting how gene expression changes over time, but often operate in the observed space or employ latents primarily for dynamical fitting, without enforcing that these latent states correspond to stable, causally meaningful cellular programs.

\textbf{Contributions.} We move beyond approaches that model either latent mechanisms or temporal dynamics in isolation, and instead propose a unified framework that captures both aspects simultaneously. This paper makes the following contributions:

\begin{itemize}[leftmargin=5pt]
    \item \textit{A latent dynamical generative model.} We propose a latent dynamical causal generative model for single-cell perturbation responses, in which a latent causal model governs data generation at each timepoint, together with a causal dynamical process that captures temporal evolution across time (\Cref{sec:setup}).
    \item \textit{Theoretical Identifiability.} We provide a rigorous theoretical analysis showing that, under certain assumption, the latent causal variables in proposed latent dynamical generative model can uniquely recovered up to a trivial transformation (~\Cref{subsec:target_id}). 
    \item \textit{A Theory-Motivated Method.} Guided by the identifiability analysis, we develop \textsc{CITE-VAE}, a Causality-Inspired temporal Variational AutoEncoder framework that learns causally meaningful latent dynamics, thereby enabling principled generalization grounded in causal mechanisms.
    (~\Cref{sec:method}).
    \item \textit{Empirical Validation.} We evaluate the proposed method on synthetic data to verify our theoretical results under controlled settings where the assumptions hold, and further demonstrate that it significantly outperforms state-of-the-art baselines on real-world perturbation datasets, where the assumptions might not strictly hold, highlighting the practicality of the proposed method (~\Cref{sec:experiments}).
\end{itemize}

\section{Related Work}
\label{app:relatedwork}

\subsection{Causal Dynamics in bioinformatics.}
Modeling single-cell dynamics from snapshot data often relies on optimal transport (OT). OT-based trajectory reconstruction is accurate but suffers from scalability and multi-stage optimization. NODE-based alternatives remove OT but often sacrifice interpretability. Cell-MNN \citep{von2025learning} addresses this by modeling latent dynamics through a locally linearized ODE, yielding an OT-free, end-to-end architecture that scales to large datasets and recovers interpretable GRN structure. Complementing these latent approaches, chronODE \citep{borsari2025chronode} fits observable multi-omic trajectories using a biophysically motivated logistic ODE, revealing a fundamental rate–saturation trade-off in gene regulation and showing how proximal versus distal regulatory elements influence expression direction and temporal speed.

A large body of prior work on temporal biological data focuses on structure learning from observed time series, predominantly using variants of Granger causality to infer directed edges between measured variables~\citep{wu2025unveiling}, and extending these principles to adapt the framework to graph-based partial orderings constructed from RNA velocity estimates~\citep{singh2024causal}, notably partial orderings inherent in branching differentiation trajectories~\citep{tejada2025causal}, MINIE~\citep{moscardo2025multi} couples differential--algebraic equations with latent pseudotime, yet relies on linear mappings over high-dimensional measurements, limiting its ability to capture non-linear latent drivers. These methods operate entirely in the observation space, estimating regulatory influence directly from expression trajectories or pseudotime-ordered profiles. While effective for correlation-based network reconstruction, they do not learn a latent representation, and therefore cannot capture unobserved causal factors, invariant mechanisms, or modular structure underlying the dynamics.

Furthermore, HALO~\citep{mao2025halo} introduces VAEs into multi-omic modeling, but its causal reasoning is still fundamentally Granger-based, operating on hand-partitioned latent dimensions rather than identifiable causal factors. These approaches~\citep{lotfollahi2023predicting, mao2025halo} therefore fall short of CRL, as they do not identify latent causal variables or their dynamics. A general CRL framework capable of discovering latent causal mechanisms from perturbation data remains an open challenge.

\subsection{Identifiable causal representations.} A key aim in modeling complex systems is to learn low-dimensional latent variables $\mathbf{z}$ from high-dimensional data $\mathbf{x}$ that match the true generative factors (independent components)~\citep{hyvarinen2001independent}. Nonlinear ICA showed that such components are not identifiable from i.i.d.\ data without extra assumptions~\citep{hyvarinen1999nonlinear}. Identifiable variants address this by introducing an auxiliary variable $\mathbf{u}$ so that latent factors $\{z_i\}_{i=1}^p$ are conditionally independent given $\mathbf{u}$~\citep{hyvarinen2016unsupervised,hyvarinen2017nonlinear}. The iVAE framework~\citep{khemakhem2020variational}, built on VAEs~\citep{kingma2013auto,rezende2014stochastic}, proves identifiability of both $\mathbf{z}$ and $p(\mathbf{x}\mid \mathbf{z})$ under mild conditions. Recent approaches impose structure in latent space: DAG-based models enforce acyclicity~\citep{yang2021causalvae,liu2022identifying,liu2023identifiable,ahuja2023interventional}, while factorized designs split latent variables into invariant, intervention-specific, and interaction parts~\citep{von2021self,gao2025domain}. While prior methods establish identifiability via auxiliary conditioning or broad structural constraints, our model ties perturbations directly to latent mechanisms. This design moves beyond heuristic augmentations or globally factorized latents, making our framework specifically tailored to single-cell perturbation.

\section{Setup and Latent Dynamical Causal Model}
\label{sec:setup}
In this section, we formalize the single-cell perturbation prediction problem and
introduce a latent dynamical causal generative model that serves as the foundation
for the subsequent theoretical analysis in~\Cref{subsec:target_id} and methodological developments in ~\Cref{sec:method}. We first define the problem
setting under unpaired snapshot observations in~\Cref{sec:problem_setup}. We then propose a latent causal  generative process that models how perturbations influence the temporal evolution of unobserved cellular states and give rise to the observed gene expression profiles in~\Cref{sec:dgp}. 

\subsection{Problem Setup}
\label{sec:problem_setup}
We consider single-cell perturbation experiments in which genetic or chemical perturbations are applied to a population of cells and gene expression is measured at multiple timepoints. Due to experimental constraints, measurements are collected at a finite set of discrete snapshot times $t \in \{0,1,\cdots,\,T\}$.
At each snapshot time $t$, we observe gene expression profiles
$\mathbf{x}^t \in \mathbb{R}^p$ from a population of cells, where $p$ denotes the number of measured genes. Each observed gene expression profile $\mathbf{x}^t$ is generated from an underlying,
unobserved cellular state $\mathbf{z}^t \in \mathbb{R}^d$.
This latent state represents the internal biological condition of a cell at time $t$, such as regulatory activity, pathway activation, or other abstract cellular programs, which are not directly observable but give rise to the measured gene expression through an unknown generative mechanism.

Each experiment is conducted under a perturbation condition indexed by a label
$\mathbf u \in \mathcal U$, which specifies the applied genetic or chemical
intervention. The perturbation is defined at the condition level and remains fixed
throughout the experiment; consequently, the same intervention label $\mathbf u$
is shared across all snapshot times $t$ within a given condition. We do not assume
access to the underlying biochemical or molecular mechanism of the intervention.
Instead, $\mathbf u$ is treated as an abstract intervention index that identifies
which perturbation has been applied.

Given observations $\{\mathbf{x}^t, \mathbf{u}\}$ collected under a finite set of
perturbation conditions, our goal is to learn a predictive model that can infer
cellular responses at future timepoints and under previously unseen perturbations.
Concretely, we aim to predict the observed distribution under unseen perturbations, as following:
\begin{equation}
p(\mathbf{x}^{t'} \mid \mathbf{u}), \qquad
\text{for } t' > t \text{ and unseen perturbations}.
\end{equation}
Since single-cell measurements at different timepoints do not correspond to the
same cells, the data consist of population-level snapshots rather than individual cells. As a result, achieving this goal requires modeling how the
latent cellular states $\mathbf{z}^t$ underlying these snapshots evolve over time under different perturbation conditions.

\subsection{A Latent Dynamical Causal Model}
\label{sec:dgp}
We now propose a latent dynamical causal generative model that captures how perturbations affect the temporal evolution of unobserved cellular states and generate the observed gene expression.

We formalize single-cell perturbation data using a latent dynamical causal generative model. To this end, motivated by limited-perturbation settings commonly encountered in single-cell experiments, we decompose the latent state $\mathbf{z}^t$ into two components, i.e., $\mathbf{z}^t = (\mathbf{z}_{\iota}^t, \mathbf{z}_{\nu}^t)$, where $\mathbf{z}_{\iota}^t \in \mathbb{R}^{d_\iota}$ represents perturbation-invariant background programs, and $\mathbf{z}_{\nu}^t \in \mathbb{R}^{d_\nu}$ denotes perturbation-responsive mechanisms. Following standard causal modeling assumptions, we associate each latent state with an independent exogenous variable, i.e., $\mathbf{n}_\iota$ for $\mathbf{z}_\iota$ and $\mathbf{n}_{\nu,i}$ for each component of $\mathbf{z}_{\nu,i}$, capturing sources of information. Given the above, under a perturbation condition $\mathbf{u}$,
we propose the following latent dynamical causal model (Figure~\ref{fig:causalgraph}):
\begin{align}
&p(\mathbf{n}_{\iota}^{t})
:= \mathcal N\!\big(\boldsymbol\mu_\iota,
\boldsymbol\sigma_\iota^2\big), \quad   
p(\mathbf{n}_{\nu}^{t} \mid \mathbf{u})=\mathcal N\!\big(\boldsymbol\mu_\nu(\mathbf u),\boldsymbol\sigma_\nu^2(\mathbf u)\big),
\label{eq:dgp_noise_nu}\\
&\mathbf{z}_{\iota}^{t}
:= \mathbf{f}_\iota(\mathbf{z}_{\iota}^{t-1}) + \mathbf{n}_{\iota}^{t},  \quad
z_{\nu,j}^{t}= {f}_{\nu,j}\!\big(\mathbf{z}_{\mathrm{pa}(j)}^{t-1},\mathbf{u}\big)
+ n_{\nu,j}^{t},
\label{eq:dgp_znu}\\
&\mathbf{x}^t
:= \mathbf{g}(\mathbf{z}^t).
\label{eq:dgp_x}
\end{align}

\begin{itemize}[leftmargin=0pt]
\item \textbf{Latent Exogenous Models Eq.~\eqref{eq:dgp_noise_nu}.}
Here, the perturbation-invariant exogenous variables $\mathbf{n}_{\iota}^{t}$ are assumed to be independent, while the perturbation-responsive exogenous variables $\mathbf{n}_{\nu}^{t}$ are assumed to be independent conditional on the perturbation label $\mathbf{u}$, which means that the distribution of $\mathbf{n}_{\nu}^{t}$ is allowed to vary with $\mathbf{u}$, capturing perturbation-induced changes in the exogenous innovations. Both $\mathbf{n}_{\iota}^{t}$ and $\mathbf{n}_{\nu}^{t}$ are modeled as Gaussian
random variables, with mean $\boldsymbol\mu_\iota$ and diagonal covariance
$\boldsymbol\sigma_\iota^2$, and mean $\boldsymbol\mu_\nu(\mathbf u)$ and diagonal
covariance $\boldsymbol\sigma_\nu^2(\mathbf u)$, respectively.

\item \textbf{Latent Dynamics Eq.~\eqref{eq:dgp_znu}.}
The perturbation-invariant latent states $\mathbf{z}_{\iota}^{t}$ evolve according to a $\mathbf{u}$-independent transition function $\mathbf{f}_\iota$, modeling background dynamics that are unaffected by perturbations. For the perturbation-responsive mechanisms, each latent $z_{\nu,j}^{t+1}$ follows a structural causal equation where the parent nodes $\mathbf{z}_{\mathrm{pa}(j)}^{t-1} \subseteq \mathbf{z}^{t-1}$ may include both perturbation-invariant and perturbation-responsive latent variables, by the mapping ${f}_{\nu,j}$, together with the perturbation label $\mathbf{u}$, for $j = 1,\ldots,d_\nu$. All causal dependencies are lagged from time $t-1$ to $t$, so the temporal graph is acyclic by construction.

\item \textbf{Observation Model Eq.~\eqref{eq:dgp_x}.}
The observed gene expression profile is generated from the latent state
via a time-invariant nonlinear mapping $\mathbf{g}$. We assume $\mathbf{g}$ is smooth and invertible, which is aligned with most of works on causal representation learning~\citep{lachapelle2022disentanglement,zhang2023identifiability,lopez2022learning,de2025interpretable}.
\end{itemize}

\begin{figure}[h]
\centering
\includegraphics[width=0.6\linewidth]{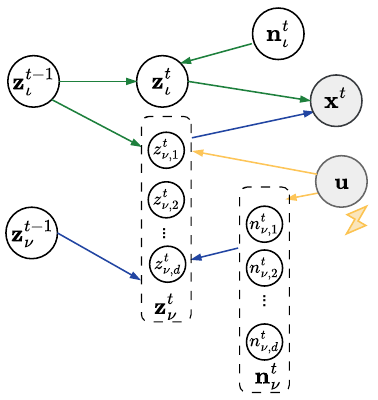}
\caption{\textbf{Illustration of data generation process with the proposed latent dynamical causal model.}
Observations $\mathbf{x}^t$ are generated from time-invariant factors $\mathbf{z}_\iota^t$ and time-variant factors $\mathbf{z}^t_\nu$.
$\mathbf{z}^t_\iota$ conditions the transition dynamics of $\mathbf{z}^{t-1}_\nu$ and $\mathbf{z}^{t-1}_\iota$, while interventions $\mathbf{u}$ induce causal mechanism shifts.}
\label{fig:causalgraph}
\end{figure}


\section{Theoretical Guarantee: Identifiability}
\label{subsec:target_id}

Our aim is to establish \emph{identifiability} for the proposed latent dynamic causal model, i.e., we seek conditions under which the latent variables $\mathbf{z}^t$ can be uniquely recovered, up to a trivial transformation, from observed temporal gene expression snapshots $\{\mathbf{x}^t\}_{t=0}^{T}$ under different perturbations $\mathbf{u}$. Such identifiability guarantees that the learned model captures the underlying causal mechanisms, supporting reliable generalization to unseen perturbations.

Without additional assumptions, exactly recovering latent variables is in general impossible from the observed variables alone, e.g., $\mathbf{x}$ and $\mathbf{u}$, even in the relatively simple nonlinear ICA setting~\cite{hyvarinen1999nonlinear,hyvarinen2016unsupervised,hyvarinen2017nonlinear}. To enable identifiability analysis that follows, we introduce the following assumptions.

\begin{assumption}[Environmental Sufficiency]
\label{asm:ts_rank}
There exist $2d_\nu$ distinct environments $\{\mathbf u_1,\ldots,\mathbf u_{2d_\nu}\}$ relative to a reference $\mathbf u_0$
such that the matrix
\begin{equation}
\mathbf L^\top =
\big[\Delta\boldsymbol\eta(\mathbf u_1),\ldots,\Delta\boldsymbol\eta(\mathbf u_{2d_\nu})\big]^\top
\in\mathbb R^{2d_\nu\times 2d_\nu}
\end{equation}
has full rank $2d_\nu$, where
\begin{equation}
\Delta\boldsymbol{\eta}(\mathbf{u})
:=
\begin{pmatrix}
\frac{\boldsymbol{\mu}_\nu(\mathbf{u})}{\boldsymbol{\sigma}_\nu^2(\mathbf{u})}
-\frac{\boldsymbol{\mu}_\nu(\mathbf{u}_0)}{\boldsymbol{\sigma}_\nu^2(\mathbf{u}_0)}\\[2mm]
-\frac12\Big(\frac{1}{\boldsymbol{\sigma}_\nu^2(\mathbf{u})}-\frac{1}{\boldsymbol{\sigma}_\nu^2(\mathbf{u}_0)}\Big)
\end{pmatrix}\in\mathbb{R}^{2d_\nu},
\end{equation}
with elementwise divisions.
\end{assumption}

\begin{assumption}[Optimal Temporal Alignment]
\label{asm:ts_align}
Let $\mathbf{\hat g}$ be an estimated decoder and $\mathbf{\hat f}=\mathbf{\hat g}^{-1}$ the corresponding encoder,
with $\mathbf{\hat f}(\mathbf x^t)=(\mathbf{\hat f}_\iota(\mathbf x^t),\mathbf{\hat f}_\nu(\mathbf x^t))$. For any fixed $t$ and invariant state $\mathbf z_\iota^t$, let
$\mathbf x^{t,(\mathbf u)}:=\mathbf{g}(\mathbf z^{t,(\mathbf u)})$ and
$\mathbf x^{t,(\mathbf u_0)}:=\mathbf{g}(\mathbf z^{t,(\mathbf u_0)})$ be two observations generated
with the same $\mathbf z_\iota^t$ but independent draws of the responsive variables
(or innovations) under environments $\mathbf u$ and $\mathbf u_0$, respectively. Define the alignment loss
\begin{equation}
\mathcal L_{\mathrm{align}}
:=
\mathbb E\Big[
\big\|
\mathbf{\hat f}_\iota(\mathbf x^{t,(\mathbf u)})
-
\mathbf{\hat f}_\iota(\mathbf x^{t,(\mathbf u_0)})
\big\|_2^2
\Big],
\label{eq:ts_align_loss}
\end{equation}
where the expectation is taken over $\mathbf z_\iota^t$ and the responsive randomness under the above coupling. Assume $\mathcal L_{\mathrm{align}}$ achieves its population global minimum, i.e., $\mathcal L_{\mathrm{align}}=0$. Then $\mathbf{\hat f}_\iota(\mathbf x^t)$ is $\mathbf u$-invariant almost surely and depends only on
$\mathbf z_\iota^t$, i.e., there exists a measurable map $\mathbf{h}_\iota$ such that
$\mathbf{\hat f}_\iota(\mathbf x^t)=\mathbf{h}_\iota(\mathbf z_\iota^t)$ almost surely.
\end{assumption}

\begin{assumption}[Temporal Intervention Sufficiency]
\label{asm:ts_isolation}
For each responsive coordinate $j\in\{1,\ldots,d_\nu\}$, there exists an environment $\mathbf u^{(j)}$
such that the parent contribution to $z_{\nu,j}^{t}$ is removed, i.e.,
\begin{equation}
f_{\nu,j}(\mathbf z_{\mathrm{pa}(j)}^{t-1},\mathbf u^{(j)}) \equiv c_j(\mathbf u^{(j)}) \quad \text{(a constant w.r.t. $\mathbf z^t$)}.
\label{eq:ts_isolation}
\end{equation}
Hence, under $\mathbf u^{(j)}$, we have $z_{\nu,j}^{t}=c_j(\mathbf u^{(j)})+n_{\nu,j}^{t}$.
\end{assumption}

\begin{theorem}[Temporal Identifiability of Latent States]
\label{thm:ts_identifiability}
Suppose the data are generated by the generative model in
Eqs.~\eqref{eq:dgp_noise_nu}--\eqref{eq:dgp_x} under Assumptions
\ref{asm:ts_rank}--\ref{asm:ts_isolation}.
Let $\boldsymbol{\theta}=(\mathbf{g},\mathbf{f}_\iota,\{f_{\nu,j}\}_j,\boldsymbol{\mu}_\iota,\boldsymbol{\sigma}_\iota^2,\boldsymbol{\mu}_\nu(\cdot),\boldsymbol{\sigma}_\nu^2(\cdot))$
denote the true parameters. Let $\boldsymbol{\hat\theta}$ be another parameter set from the same model class. Assume that for all $t$ and all environments $\mathbf u$, the induced conditional distributions match:
\begin{equation}
p_{\boldsymbol{\theta}}(\mathbf x^{t}\mid \mathbf x^{t-1},\mathbf u)\ \equiv\ p_{\boldsymbol{\hat\theta}}(\mathbf x^{t}\mid \mathbf x^{t-1},\mathbf u).
\label{eq:ts_match_transition}
\end{equation}
Then there exist:
(i) a block nonsingular matrix $\mathbf A_\iota\in\mathbb R^{d_\iota\times d_\iota}$ and a vector $\mathbf c_\iota$, and (ii) a scaling-permutation matrix $\mathbf P_\nu$ and a vector $\mathbf c_\nu$,
such that for all $t$,
\begin{align}
\mathbf z_\iota^t &= \mathbf A_\iota\,\hat{\mathbf z}_\iota^t + \mathbf c_\iota,
\label{eq:ts_id_iota}\\
\mathbf z_\nu^t &= \mathbf P_\nu\,\hat{\mathbf z}_\nu^t + \mathbf c_\nu.
\label{eq:ts_id_nu}
\end{align}
That is, the invariant state block is identifiable up to an invertible linear block transform,
and the responsive state block is identifiable up to permutation and component-wise scaling.
\end{theorem}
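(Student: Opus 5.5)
The plan follows the standard iVAE / temporal-CRL template: turn the matched transitions \eqref{eq:ts_match_transition} into an identity between latent densities via the change of variables through $\mathbf g$ and $\hat{\mathbf g}$. Then use environment differences (\Cref{asm:ts_rank}) to force linearity, and \Cref{asm:ts_isolation} to reduce the linear map on the responsive block to a scaling-permutation.

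\textbf{Step 1 (latent density identity).} Since $\mathbf g,\hat{\mathbf g}$ are smooth and invertible, conditioning on $\mathbf x^{t-1}$ is equivalent to conditioning on $\mathbf z^{t-1}$ (respectively $\hat{\mathbf z}^{t-1}$). Define $\mathbf h:=\mathbf g^{-1}\circ\hat{\mathbf g}$, so that $\mathbf z^t=\mathbf h(\hat{\mathbf z}^t)$. Matching \eqref{eq:ts_match_transition} and taking logs gives
\[
\log p_{\boldsymbol\theta}(\mathbf h(\hat{\mathbf z}^t)\mid \mathbf z^{t-1},\mathbf u)+\log|\det J_{\mathbf h}(\hat{\mathbf z}^t)|=\log p_{\hat{\boldsymbol\theta}}(\hat{\mathbf z}^t\mid\hat{\mathbf z}^{t-1},\mathbf u).
\]
By \eqref{eq:dgp_znu}, conditional on the past the components are independent Gaussians with means shifted by $\mathbf f_\iota$ and $f_{\nu,j}$. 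Both sides therefore factor into an $\iota$-part that does not depend on $\mathbf u$ and a $\nu$-part that is exponential-family in $\mathbf u$, with sufficient statistics $(z_{\nu,j}^t-m_j,\,(z_{\nu,j}^t-m_j)^2)$ and natural parameters $\boldsymbol\eta(\mathbf u)$.

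\textbf{Step 2 (isolate the responsive block).} For the responsive coordinates, first work in the isolating environments $\mathbf u^{(j)}$ of \Cref{asm:ts_isolation}, and also those of \Cref{asm:ts_rank}, where the parent-dependent mean is replaced by a constant. This removes the dependence on $\mathbf z^{t-1}$ from the $j$th sufficient statistic. Subtract the identity at $\mathbf u_0$ from the identity at each $\mathbf u_k$, $k=1,\ldots,2d_\nu$. The Jacobian term and the $\mathbf u$-invariant $\iota$-terms cancel, leaving $\mathbf L^\top \mathbf T(\mathbf z_\nu^t)=\hat{\mathbf L}^\top\hat{\mathbf T}(\hat{\mathbf z}_\nu^t)+\text{const}$, where $\mathbf T=(z_{\nu,j},z_{\nu,j}^2)_j$. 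Inverting $\mathbf L$ with \Cref{asm:ts_rank} makes $\mathbf T(\mathbf z_\nu^t)$ an affine function of $\hat{\mathbf T}(\hat{\mathbf z}_\nu^t)$. The usual Khemakhem-style argument then applies: differentiate twice and use that a quadratic in one variable cannot be matched nontrivially by mixtures. The affine map must respect the $(z,z^2)$ pairing, so each $z_{\nu,j}$ is an affine function of a single $\hat z_{\nu,\pi(j)}$, giving \eqref{eq:ts_id_nu} with $\mathbf P_\nu$ a scaling-permutation. This quadratic-matching step is the main obstacle. I would first try showing that the $2d_\nu\times2d_\nu$ affine matrix must be block-diagonal in $2\times2$ blocks, up to permutation, by comparing second derivatives in $\hat{\mathbf z}_\nu$: cross terms $\hat z_i\hat z_k$ cannot appear in $z_j^2$ unless only one $\hat z$ enters $z_j$.

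\textbf{Step 3 (invariant block).} By \Cref{asm:ts_align}, $\hat{\mathbf z}_\iota^t=\hat{\mathbf f}_\iota(\mathbf x^t)=\mathbf h_\iota(\mathbf z_\iota^t)$ depends only on $\mathbf z_\iota^t$. Combined with Step 2, the Jacobian of $\mathbf h^{-1}$ is block-triangular, with the $\nu$-block already known to be a scaled permutation, so $\mathbf h_\iota$ is invertible. Plugging back into the log-density identity, the $\iota$-part reads $\log\mathcal N(\mathbf h_\iota^{-1}(\hat{\mathbf z}_\iota)-\mathbf f_\iota(\cdot);\boldsymbol\mu_\iota,\boldsymbol\sigma_\iota^2)+\log|\det J|=\log\mathcal N(\hat{\mathbf z}_\iota-\hat{\mathbf f}_\iota(\cdot);\hat{\boldsymbol\mu}_\iota,\hat{\boldsymbol\sigma}_\iota^2)$, for all values of the past state. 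Differentiating with respect to the past state (which shifts the Gaussian mean) and then with respect to $\hat{\mathbf z}_\iota$ gives $J_{\mathbf h_\iota^{-1}}^\top\boldsymbol\Sigma_\iota^{-1}(\partial\mathbf f_\iota)=\hat{\boldsymbol\Sigma}^{-1}(\partial\hat{\mathbf f}_\iota)$. The right-hand side is independent of $\hat{\mathbf z}_\iota^t$. Assuming $\partial\mathbf f_\iota$ is invertible at some point (a mild genericity condition I would state explicitly), $J_{\mathbf h_\iota^{-1}}$ is constant. This yields the affine relation \eqref{eq:ts_id_iota} with nonsingular $\mathbf A_\iota$. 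If that genericity condition is unavailable, I would fall back on Gaussian-to-Gaussian pushforward: an invertible smooth map carrying one Gaussian family to another for all shifts must be affine.
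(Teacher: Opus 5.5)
Your Step 2 has a genuine gap where you reach $\mathbf L^\top\mathbf T(\mathbf z_\nu^t)=\hat{\mathbf L}^\top\hat{\mathbf T}(\hat{\mathbf z}_\nu^t)+\text{const}$ and then invert $\mathbf L$.

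Dependence on the past is not the real obstacle. Fix $\mathbf x^{t-1}$ and expand in the uncentred statistics $(z_{\nu,j}^t,(z_{\nu,j}^t)^2)$; the past then enters only the coefficients, which are constants on both the $\boldsymbol\theta$ and $\hat{\boldsymbol\theta}$ sides.

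The real obstacle is that the transition mean $f_{\nu,j}(\mathbf z^{t-1}_{\mathrm{pa}(j)},\mathbf u)$ depends on $\mathbf u$. The coefficient of $z_{\nu,j}^t$ in the $\mathbf u_k$-minus-$\mathbf u_0$ identity is therefore
\[
\frac{f_{\nu,j}(\mathbf z^{t-1}_{\mathrm{pa}(j)},\mathbf u_k)+\mu_{\nu,j}(\mathbf u_k)}{\sigma^2_{\nu,j}(\mathbf u_k)}-\frac{f_{\nu,j}(\mathbf z^{t-1}_{\mathrm{pa}(j)},\mathbf u_0)+\mu_{\nu,j}(\mathbf u_0)}{\sigma^2_{\nu,j}(\mathbf u_0)}.
\]
The matrix you must invert is built from these quantities, not from $\Delta\boldsymbol\eta(\mathbf u_k)$, and Assumption~\ref{asm:ts_rank} says nothing about its rank.

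Assumption~\ref{asm:ts_isolation} does not rescue this, for two reasons.
\begin{itemize}
\item It gives, for each $j$, some environment in which coordinate $j$ alone loses its parents. It does not say that $\mathbf u_0,\mathbf u_1,\ldots,\mathbf u_{2d_\nu}$ isolate every coordinate at once, so your ``and also those of Assumption~\ref{asm:ts_rank}'' is an extra hypothesis.
\item Even in a fully isolating environment, $c_j(\mathbf u)$ still shifts the coefficient to $(c_j(\mathbf u)+\mu_{\nu,j}(\mathbf u))/\sigma^2_{\nu,j}(\mathbf u)$.
\end{itemize}
You need to add a condition, e.g.\ full rank of these conditional natural-parameter differences at some past state.

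The paper's proof has the same hole in disguise. Its innovation $\mathbf n^t=\mathbf z^t-\mathbf F(\mathbf z^{t-1},\mathbf u)$ is a different function of $(\mathbf x^{t-1},\mathbf x^t)$ in each environment, so the log-ratio identity in its Step I compares densities at different points.

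Granting the affine relation, your direct state-space argument is cleaner than the paper's route (identify innovations, lift through $\mathbf h=\mathbf g^{-1}\circ\hat{\mathbf g}$, then prune $\mathbf M_{\nu\iota}$ with Assumption~\ref{asm:ts_isolation}). The right-hand side is free of $\hat{\mathbf z}_\iota$, so $\mathbf z_\nu$ depends on $\hat{\mathbf z}_\nu$ only. Since $z_{\nu,j}^2$ has degree at most two, $z_{\nu,j}$ has no $\hat z_{\nu,k}^2$ terms. Vanishing cross terms then leave one nonzero entry per row. Assumption~\ref{asm:ts_isolation} is not needed for this step.

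For the invariant block, your mixed-derivative argument is correct and differs from the paper's: differentiate in the past state, then in $\hat{\mathbf z}_\iota^t$, and the $\nu$-factors and the Jacobian term drop out. The condition you flag, $\partial\mathbf f_\iota$ invertible somewhere, is genuinely necessary, not a convenience. Here is a counterexample without it:
\begin{itemize}
\item Take $d_\iota\ge 2$, $\boldsymbol\sigma_\iota^2=\mathbf 1$ and $\mathbf f_\iota\equiv\mathbf a$, and set $\mathbf m=\mathbf a+\boldsymbol\mu_\iota$.
\item Let $\rho(\mathbf z_\iota)=\mathbf m+R(\|\mathbf z_\iota-\mathbf m\|)(\mathbf z_\iota-\mathbf m)$, where $R(r)$ is a rotation by a smooth, non-constant angle that is constant near $r=0$. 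Then $\rho$ preserves $\mathcal N(\mathbf m,\mathbf I)$.
\item Absorbing $\rho^{-1}$ into $\hat{\mathbf g}$ and into the $f_{\nu,j}$ gives a $\hat{\boldsymbol\theta}$ in the model class that matches every transition and meets all three assumptions, including zero alignment loss.
\item Yet $\mathbf z_\iota^t=\rho^{-1}(\hat{\mathbf z}_\iota^t)$ is nonlinear.
\end{itemize}

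So drop your fallback. The claim that a smooth bijection taking a Gaussian to a Gaussian must be affine is false for $d_\iota\ge 2$, and it is exactly what the paper's Step II-b relies on. The ``for all shifts'' version holds only when the range of $\mathbf f_\iota$ is rich, i.e.\ under essentially the same genericity condition. State that condition as an explicit hypothesis.
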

Proof can be found in~\Cref{app:proof_identifiability}.
\hfill$\square$
\paragraph{Proof sketch.}
The proof proceeds in three steps. In Step I, Using invertibility of the observation map, transition matching between $(\mathbf x^{t-1},\mathbf x^{t})$ is reduced to matching latent innovations $\mathbf n^t$ via a change of variables with a triangular Jacobian, eliminating environment-invariant factors. In Step II, using environment-dependent Gaussian conditionals and a rank condition, the perturbation-responsive innovations $\mathbf n_\nu^t$ are identified component-wise up to scaling and permutation by nonlinear-ICA arguments, while the invariant innovations $\mathbf n_\iota^t$ are identified up to a nonsingular linear transform using the alignment constraint. In Step III, Using the additive transition structure and innovation identifiability, innovation identifiability is lifted to state identifiability, yielding scaling--permutation identifiability for $\mathbf z_\nu^t$ and linear identifiability for $\mathbf z_\iota^t$.

\paragraph{Justification of Assumptions~\ref{asm:ts_rank}-\ref{asm:ts_isolation}.}
Assumption~\ref{asm:ts_rank} requires that the environment or
intervention label induces sufficiently rich and diverse changes in the distributions
of the responsive innovations, so that different latent factors exhibit distinct
distributional responses and can be disentangled component-wise. This type of environment diversity condition is standard in nonlinear ICA and
identifiable latent variable models with auxiliary variables
\citep{hyvarinen2016unsupervised,khemakhem2020variational,liu2022identifying}. Assumption~\ref{asm:ts_align}  enforces that, at the population
optimum of the alignment objective, representations that mix invariant and
environment-dependent information are ruled out, forcing the learned invariant block
to depend only on the perturbation-invariant latent state.
Such population-level optimality assumptions are commonly adopted in theoretical
analyses of contrastive and invariant representation learning
\citep{von2021self}. Assumption~\ref{asm:ts_isolation} ensures that, for
each responsive latent coordinate, there exists at least one environment in which
parent contributions are removed, thereby isolating the corresponding exogenous
innovation and preventing degenerate causal structures.
Similar isolation or hard-intervention conditions are widely used in identifiability
analyses of causal representation learning and dynamical causal models
\citep{liu2022identifying,liu2023identifiable,lippe2022causal,von2023nonparametric}.

\textit{Insights.} The above theoretical analysis provides a principled explanation of why combining likelihood-based learning with alignment objectives can, in principle, recover uniquely latent causal variables in temporal settings. Maximizing the likelihood enforces that the learned latent variables explain the observed data through a valid generative process, while alignment across environments eliminates spurious degrees of freedom by preventing invariant and responsive factors from being entangled. Together, these two ingredients are sufficient to identify the true latent structure up to the unavoidable equivalence classes.

Importantly, the analysis reveals a clear division of labor between the two objectives. Likelihood maximization recovers the latent variables only up to broad affine or permutation ambiguities, whereas alignment provides the additional constraints needed to pin down invariant subspaces and isolate perturbation-specific mechanisms. This interplay explains why neither objective alone is sufficient, but their combination yields identifiability guarantees under mild and interpretable assumptions. Beyond theoretical interest, these results offer concrete guidance for method design. They suggest that effective representation learning in dynamical and perturbed systems should explicitly couple generative modeling with alignment or invariance constraints, rather than relying on either component in isolation. This insight directly motivates the design choices in our proposed approach, where likelihood-based training ensures fidelity to the data-generating process and alignment guide the model toward causal representations.

\section{Methodology: Theory-Driven \textsc{CITE-VAE}}
\label{sec:method}
\paragraph{From identifiability theory to algorithm design.}
The identifiability analysis in~\Cref{subsec:target_id} establishes that, in temporal
interventional settings with single-cell perturbation observations, recovering causally meaningful latent variables requires the \emph{joint} effect of likelihood-based generative modeling and invariant alignment across environments. \textit{Likelihood} alone constrains representations to explain the observed data distribution, but admits large equivalence classes that are not causally interpretable. \textit{Invariant alignment} removes these ambiguities by enforcing block-wise invariance, thereby enabling identifiability of latent causal dynamics. \textsc{CITE-VAE}, a Causality-Inspired temporal Variational
AutoEncoder framework, is designed to faithfully translate these population-level theoretical requirements into a practical learning framework.

\subsection{Likelihood Maximization}
\label{subsec:likelihood}

We start from a likelihood-based formulation of the latent dynamical causal process in Eqs.~\ref{eq:dgp_noise_nu}–\ref{eq:dgp_x}. In principle, maximizing the likelihood amounts to optimizing the following objective:
\begin{equation}
\mathcal L_{\mathrm{lik}}
:=
\mathbb E_{\mathbf u}
\Big[
\mathbb E_{\mathbf x^{0:T}\sim p_{\mathrm{data}}(\cdot\mid\mathbf u)}
\log p_\theta(\mathbf x^{0:T}\mid\mathbf u)
\Big],
\label{eq:likelihood_objective}
\end{equation}
where
\begin{equation}
p_\theta(\mathbf x^{0:T}\mid\mathbf u)
=
\int
\prod_{t=0}^T
p_{\boldsymbol{\theta}}(\mathbf x^t\mid\mathbf z^t)\,
p_{\boldsymbol{\theta}}(\mathbf z^{t}\mid\mathbf z^{t-1},\mathbf u)\,
\mathrm d\mathbf z^{0:T}.
\label{eq:full_likelihood}
\end{equation}
However, the posterior over latent trajectories $p_{\boldsymbol{\theta}}(\mathbf z^{0:T}\mid\mathbf x^{0:T},\mathbf u)$ is intractable in general. We therefore introduce a variational approximation $q(\mathbf z^t\mid\mathbf x^t,\mathbf u)$ and optimize a variational lower bound.
At each timepoint, we impose a structured posterior factorization
\begin{equation}
q(\mathbf z^t\mid\mathbf x^t,\mathbf u)
=
q(\mathbf z_\iota^t\mid\mathbf x^t)
\,
q(\mathbf z_\nu^t\mid\mathbf x^t,\mathbf u),
\label{eq:posterior}
\end{equation}
which reflects the invariant--responsive decomposition, aligned with the proposed latent dynamical causal process in Eqs.~\ref{eq:dgp_noise_nu}–\ref{eq:dgp_x}, as well as Theorem~\ref{thm:ts_identifiability}.

Under this approximation, likelihood maximization is replaced by maximizing a temporal evidence lower bound (ELBO):

\begin{equation}
\begin{aligned}
\mathcal L_{\mathrm{ELBO}}
=
&\mathbb E_{t}
\Big[
\mathbb E_{q(\mathbf z^t\mid\mathbf x^t,\mathbf u)}\log p_\theta(\mathbf x^t\mid\mathbf z^t)
\\
&-
\mathbb E_{q(\mathbf z^{t-1}\mid\mathbf x^{t-1},\mathbf u)}
\mathrm{KL}\!\big(
q(\mathbf z^{t}\mid\mathbf x^{t},\mathbf u)
\;\|\;
p(\mathbf z^{t}\mid\mathbf z^{t-1},\mathbf u)
\big)
\Big].
\end{aligned}
\label{eq:temporal_elbo}
\end{equation}

where the transition prior factorizes according to the assumed causal structure:
\begin{equation}
p(\mathbf z^{t}\mid \mathbf z^{t-1},\mathbf u)
=
p(\mathbf z_\iota^{t}\mid \mathbf z_\iota^{t-1})
\prod_{j=1}^{d_\nu}
p\!\big(
z_{\nu,j}^{t}\mid \mathbf z_{\mathrm{pa}(j)}^{t-1},\mathbf u
\big).
\label{eq:transition_factorization}
\end{equation}

\paragraph{Discussion.}
Maximizing the temporal ELBO in \eqref{eq:temporal_elbo} enforces that the latent
variables retain sufficient information to reconstruct the observed snapshots
and to account for their temporal evolution under interventions.
From an information-theoretic perspective, this objective prevents irreversible
information loss from the observations to the latent space, and thus constitutes
a necessary condition for recovering the underlying latent generative process. However, sufficiency alone does not imply identifiability. Neither the likelihood objective Eq.~\eqref{eq:likelihood_objective} nor its variational relaxation Eq.~\eqref{eq:temporal_elbo} constrains how this information is distributed across latent coordinates. As established in Theorem~\ref{thm:ts_identifiability}, there generally exist multiple latent parameterizations that preserve all information relevant for reconstruction and achieve identical ELBO values, while differing in their internal organization. In particular, perturbation-invariant and perturbation-responsive factors may remain entangled without affecting the variational objective. Crucially, when the information-preserving property induced by ELBO maximization
is combined with an invariant alignment constraint (introduced next), the remaining degrees of freedom are eliminated. Together, likelihood maximization ensures that the latent space captures all relevant information, while alignment enforces the correct partitioning of this information across invariant and responsive subspaces. Under the conditions in Theorem~\ref{thm:ts_identifiability}, this combination is sufficient to guarantee that the latent causal variables $\mathbf z$ are recoverable.

\subsection{Invariant Alignment}
\label{subsec:alignment}

The identifiability analysis in Theorem~\ref{thm:ts_identifiability} shows that likelihood-based learning alone is insufficient for identifying the latent variables. We thus introduce an invariant alignment constraint that enforces a principled separation between perturbation-invariant and perturbation-responsive factors.

For a fixed timepoint $t$, observations generated under different perturbations may differ substantially at the data level, but should share the same invariant latent state $\mathbf z_\iota^t$.
Therefore, representations intended to capture $\mathbf z_\iota^t$ should be
insensitive to the intervention label $\mathbf u$. Formally, let $\hat\pi^{\mathrm{cf}}_{\mathbf u,t}$ denote a coupling between
snapshots observed under intervention $\mathbf u$ and a reference condition
$\mathbf 0$ at the same time $t$.
We introduce the following alignment loss:
\begin{equation}
\mathcal L_{\mathrm{align}}
=
\mathbb E_{(\mathbf x^{t,(\mathbf u)},\mathbf x^{t,(\mathbf 0)})\sim\hat\pi^{\mathrm{cf}}_{\mathbf u,t}}
\Big[
\big\|
\boldsymbol\mu_{\phi,\iota}(\mathbf x^{t,(\mathbf u)})
-
\boldsymbol\mu_{\phi,\iota}(\mathbf x^{t,(\mathbf 0)})
\big\|_2^2
\Big],
\label{eq:align_loss}
\end{equation}
where $\boldsymbol\mu_{\phi,\iota}(\cdot)$ denotes the mean of the variational
posterior $q_{\phi_\iota}(\mathbf z_\iota^t\mid \mathbf x^t)$ introduced in
Eq.~\ref{eq:posterior}. Minimizing $\mathcal L_{\mathrm{align}}$ encourages the inferred invariant representation to be stable across environments.

\paragraph{Discussion.}
Intuitively, the alignment constraint encourages the model to represent perturbation-invariant information in a dedicated latent subspace, by enforcing consistency of the inferred invariant representation across environments. It is not intended to improve reconstruction accuracy, but rather to guide how the information preserved by likelihood maximization is organized within the latent space. From the results in Theorem~\ref{thm:ts_identifiability}, alignment plays a complementary role to ELBO maximization. While the ELBO ensures that no relevant information is lost in the latent representation, the alignment constraint ensure identifying perturbation-invariant information. Together, these two components guarantee that perturbation-invariant and perturbation-responsive factors occupy distinct latent subspaces, yielding
identifiability.

\begin{figure*}[h]
    \centering
    \includegraphics[width=0.85\textwidth]{kdd2026/figures/citevae.pdf}
    \caption{\textsc{CITE-VAE} framework for learning latent causal dynamics from single-cell time-series data. Observed snapshots are encoded into a latent space that is explicitly partitioned into a perturbation-invariant block $\mathbf z_\iota$ and a perturbation-responsive block $\mathbf z_\nu$. Temporal learning is enabled via distributional coupling between consecutive timepoints, allowing latent transitions to be learned without cell-wise pairing. An invariant alignment objective enforces consistency of $\mathbf z_\iota$ across environments, while sparse interventional transition mechanisms govern the evolution of $\mathbf z_\nu$.}
    \label{fig:citevae}
\end{figure*}

\subsection{Sparsity on Latent Causal Structure}
\label{subsec:sparsity}

While the combination of likelihood maximization and invariant alignment provides identifiability guarantees at the population level
(Theorem~\ref{thm:ts_identifiability}), practical learning is inevitably subject to finite-sample effects, estimation noise, and imperfect optimization. In such regimes, the results in the Theorem may not be cleanly resolved, leading to unstable or overly dense latent transition
structures. To mitigate these issues and to bias the solution toward parsimonious and
interpretable causal mechanisms, we introduce an explicit structural sparsity constraint on the responsive dynamics.

We focus on the perturbation-responsive block $\mathbf z_\nu^t$, whose dynamics
encode how perturbations propagate over time. We enforce that each coordinate $z_{\nu,j}^{t}$ depends only on a small subset of parent variables $\mathbf z_{\mathrm{pa}(j)}^{t-1}$. To operationalize this, we introduce a learnable lagged adjacency matrix $\mathbf{A}\in\mathbb R^{d_\nu\times d_\nu}$ that gates cross-coordinate influences
from $\mathbf z_\nu^{t-1}$ to $\mathbf z_\nu^{t}$. To ensure that the induced structural equation model defines a valid directed acyclic graph, we parameterize $A$ as a strictly lower-triangular matrix under a fixed latent ordering. This construction guarantees acyclicity by design, avoiding the need for additional soft constraints or iterative DAG penalties. As a result, we arrive:
\begin{equation}
\mathcal L_{\mathrm{reg}}(\mathbf{A}) = \|\mathbf{A}\|_1.
\label{eq:sparsity_penalty}
\end{equation}
This regularizer suppresses spurious dependencies that may arise due to noise or limited data, and biases the learned dynamics toward sparse and interpretable mechanism changes.

\subsection{Overall Training Objective}
\label{subsec:objective}

We jointly optimize a likelihood-based variational objective with invariant alignment and structural sparsity regularization. In the idealized population setting, likelihood maximization together with alignment yields identifiability of the latent causal variables (\Cref{thm:ts_identifiability}). In practice, finite-sample effects and imperfect optimization can lead to estimation bias and unstable transition structures, motivating the additional sparsity constraint introduced in~\Cref{subsec:sparsity}.

\paragraph{Temporal ELBO}
Given a pseudo-pair $(\mathbf x^{t-1},\mathbf x^{t})\sim \hat\pi_{\mathbf u,t}$ constructed from snapshots, we optimize the
following temporal ELBO, a temporal version of Eq.~\ref{eq:temporal_elbo}:

\begin{equation}
\begin{aligned}
\mathcal L_{\mathrm{temp}}
:=&
\mathbb E_{(\mathbf x^{t-1},\mathbf x^{t})\sim \hat\pi_{\mathbf u,t}}
\Big[
\mathbb E_{q_\phi(\mathbf z^{t-1}\mid \mathbf x^{t-1},\mathbf u)}
\log p_\theta(\mathbf x^{t-1}\mid \mathbf z^{t-1})
\\
&+
\mathbb E_{q_\phi(\mathbf z^{t}\mid \mathbf x^{t},\mathbf u)}
\log p_\theta(\mathbf x^{t}\mid \mathbf z^{t})
\\
&-
\mathbb E_{q_\phi(\mathbf z^{t-1}\mid \mathbf x^{t-1},\mathbf u)}
\mathrm{KL}\!\Big(
q_\phi(\mathbf z^{t}\mid \mathbf x^{t},\mathbf u)
\ \big\|\ 
p_\psi(\mathbf z^{t}\mid \mathbf z^{t-1},\mathbf u)
\Big)
\Big].
\end{aligned}
\label{eq:temp_elbo_final}
\end{equation}

where the transition prior $p_\theta(\mathbf z^{t}\mid \mathbf z^{t-1},\mathbf u)$
factorizes according to the assumed causal structure in
\eqref{eq:transition_factorization}.

Together with the invariant alignment loss $\mathcal L_{\mathrm{align}}$ in
\eqref{eq:align_loss} and the sparsity penalty $\mathcal L_{\mathrm{reg}}(A)$ in
\eqref{eq:sparsity_penalty}, we arrive the following final objective:
\begin{equation}
\max\;
\sum_{\mathbf u}\sum_{t}
\mathcal L_{\mathrm{temp}}
\;-\;
\lambda_{\mathrm{align}}
\sum_{\mathbf u}\sum_{t}
\mathcal L_{\mathrm{align}}
\;-\;
\lambda_{\mathrm{reg}}
\sum_{\mathbf u}\sum_{t}
\mathcal L_{\mathrm{reg}}(A),
\label{eq:final_objective}
\end{equation}
where $\lambda_{\mathrm{align}}$ and $\lambda_{\mathrm{reg}}$ control the strength
of invariant alignment and structural sparsity, respectively. An overview of the proposed \textsc{CITE-VAE} framework is illustrated in \Cref{fig:citevae}.

\section{Empirical Findings}
\label{sec:experiments}
\subsection{Experiments on Causal-3DIdent Data}
To validate our theoretical results under controlled conditions, we first conduct experiments on Causal-3DIdent, a commonly used dataset in causal representation learning for evaluating identifiability~\citep{von2021self, zimmermann2021contrastive}. Specifically, the underlying generative factors include object position and rotation (multi-dimensional), object, background, and spotlight hue, spotlight rotation, and object shape, yielding 11 degrees of variation in total. To extend the original non-temporal dataset to a temporal setting, we impose time-invariant causal dependencies among these factors and generate short sequences via first-order Markov transitions. As illustrated in~\Cref{fig:causal3d_graph}, each latent factor at time $t{+}1$ depends on a subset of parent factors at time $t$, forming a two-time-slice structural equation model. Continuous factors evolve according to Gaussian transition distributions whose means are nonlinear functions of their parents, while a subset of factors (e.g., shape) remains constant within each sequence, providing natural sources
of temporal invariance. Interventions are specified by a binary vector $\mathbf{u}\in\{0,1\}^K$ indicating which latent factors are directly targeted for an entire sequence, with the intervention $\mathbf{u}$ held fixed across time. This construction yields a controlled temporal causal process that aligns with the assumptions of our theoretical analysis. Further details on the factor definitions, transition dynamics, and intervention mechanisms are provided in~\Cref{app:causal3d}.

\begin{figure}[t]
  \centering
  \includegraphics[width=0.95\columnwidth]{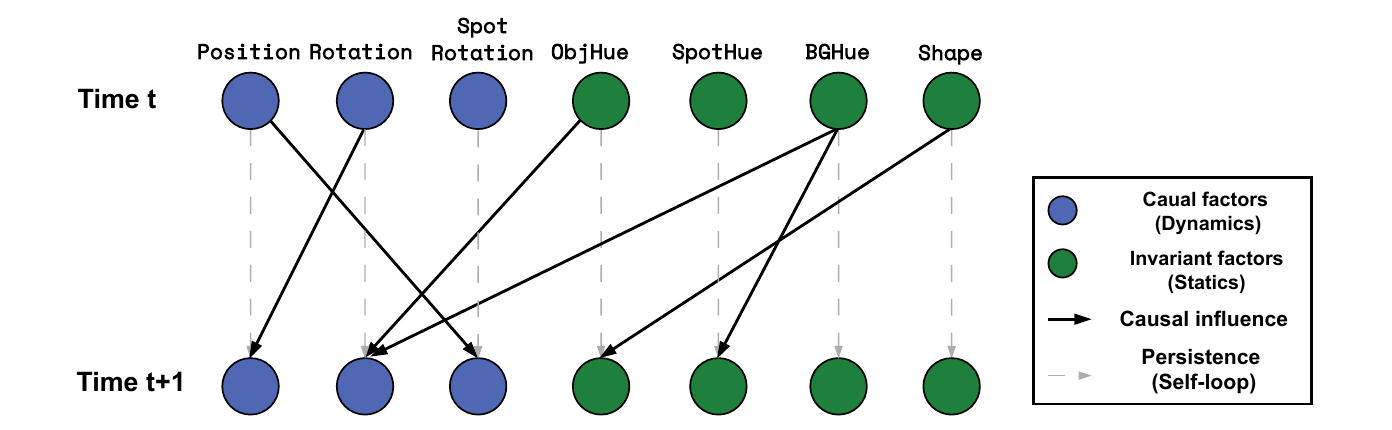}
  \caption{\textbf{Temporal causal structure of the synthetic benchmark.}
  The graph specifies two-time-slice dependencies governing factor transitions from $t$ to $t{+}1$ among the 11 dimensions of variation. Solid arrows denote cross-factor causal influences, while dashed self-loops indicate temporal persistence. Factors are grouped into intervention-responsive (dynamic) and invariant subsets to evaluate identifiability.}
  \label{fig:causal3d_graph}
\end{figure}

\begin{table}[htbp]
\centering
\caption{Non-Linear grounding performance on Causal3DIdent. We report $R^2$ and Spearman correlation for the recovered latent coordinates. The results are consistent with Theorem~\ref{thm:ts_identifiability}.}
\label{tab:causal3dident_grounding}
\begin{tabular}{lcc}
\toprule
\textbf{Representation} & $\mathbf{R^2}$ $\uparrow$ & \textbf{Spearman} $\uparrow$ \\
\midrule
Latent factors $\mathbf{z}_\nu$       & 0.959 & 0.997 \\
Latent factors $\mathbf{z}_\iota$ & 0.998 & 0.988 \\
Global system                        & 0.989 & 0.991 \\

\bottomrule
\end{tabular}
\label{tab:simulation_results}
\end{table}

\paragraph{Results and discussion.} On the Causal3DIdent benchmark, ground-truth latent factors are available, allowing us to directly evaluate identifiability. Following common practice in causal representation learning, we assess how well the learned latent variables recover the true generative factors via nonlinear probing, as reported in \Cref{tab:causal3dident_grounding}.

Specifically, we report explained variance ($R^2$) to measure how accurately the ground-truth factors can be reconstructed from the learned representation using a non-linear map, together with Spearman rank correlation to evaluate monotonic consistency. The latter metric is invariant to invertible reparameterizations and thus well-aligned with the theoretical notion of identifiability. See~\Cref{app:metrics} for more details. Results show that both the perturbation-responsive latents $\mathbf{z}_\nu$ and the invariant latents $\mathbf{z}_\iota$ exhibit strong agreement with the ground-truth factors. This indicates that the learned representation preserves the underlying latent state manifold and recovers the true causal factors, consistent with the guarantees in Theorem~\ref{thm:ts_identifiability}. Additional results are provided in~\Cref{app:causal3d}.

\paragraph{Trajectory  visualization.}
As a qualitative complement to the quantitative grounding results above,
\Cref{fig:manifold_recovery_v} visualizes the temporal trajectories obtained by
linearly projecting the learned perturbation-responsive latents $\mathbf z_\nu$ to the ground-truth Cartesian coordinate space. This visualization is designed to examine whether the learned representation preserves the temporal evolution of the underlying states, rather than only achieving frame-wise factor alignment.

The predicted trajectories closely follow the ground-truth motion across
multiple time steps, and the projected coordinates exhibit strong agreement with
their true counterparts. Notably, the recovered trajectories evolve smoothly over time and maintain the correct relative geometry and direction of motion, indicating that the latent dynamics capture consistent state transitions. This qualitative evidence aligns with the high $R^2$ and rank-based grounding
scores, and illustrates that the learned responsive latents encode temporally
coherent latents that govern the observed dynamics.

\begin{figure}[t]
  \centering
  \begin{subfigure}[t]{0.45\columnwidth}
    \centering
    \includegraphics[width=\linewidth]{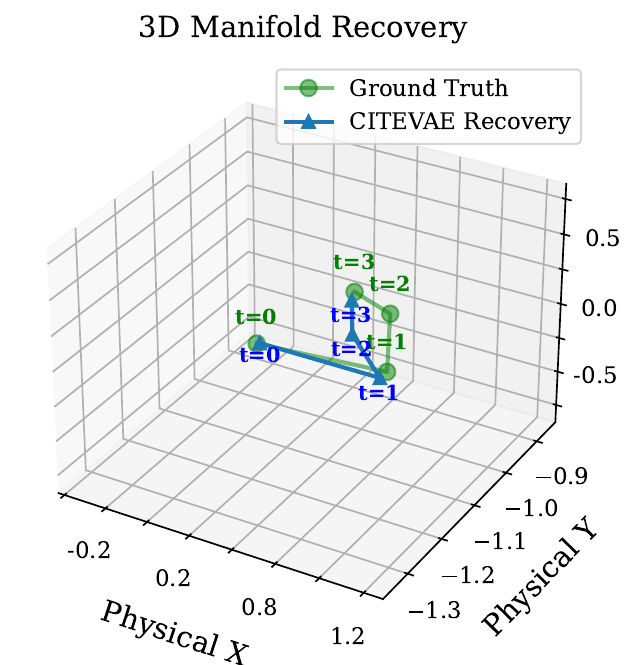}
    \caption{}
    \label{fig:manifold_traj_v}
  \end{subfigure}\hfill
  \begin{subfigure}[t]{0.45\columnwidth}
    \centering
    \includegraphics[width=\linewidth]{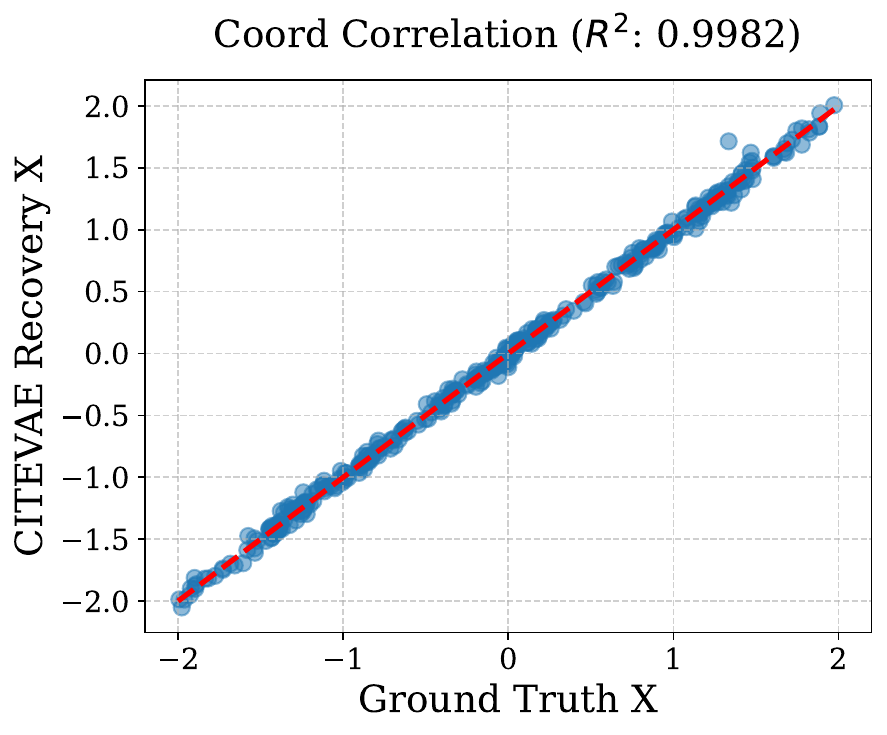}
    \caption{}
    \label{fig:manifold_scatter_v}
  \end{subfigure}

  \caption{\textbf{Trajectory-level visualization from $\mathbf{z}_\nu$.}
The responsive latents $\mathbf{z}_\nu$ are linearly projected to the ground-truth Cartesian coordinates.
\textbf{Left:} the resulting trajectories closely track the ground-truth physical motion over time.
\textbf{Right:} predicted versus true coordinates exhibit strong linear agreement, with points concentrated near the diagonal (high $R^2$).}
\vspace{-1.5em}
\label{fig:manifold_recovery_v}
\end{figure}

\subsection{Experiments on Real-world Single-Cell Data}
Our method is derived from a theory-driven generative model and relies on several assumptions, such as latent additive-noise dynamics. While these assumptions can be explicitly controlled and verified in synthetic settings, they are difficult to strictly validate in real biological systems. We therefore turn to real-world data in this section to assess whether the proposed approach
remains effective when the theoretical assumptions are only approximately satisfied,
and whether the learned representations still exhibit meaningful generalization
and predictive behavior in practice.

To this end, we evaluate our method on a longitudinal hiPSC single-cell CRISPR
perturbation dataset~\citep{ishikawa2023renge}, which profiles human induced pluripotent
stem cell differentiation under 23 distinct single-gene knockouts measured at multiple
discrete time points. The data consists of cross-sectional single-cell RNA-seq snapshots, as continuous individual trajectories are inaccessible due to the destructive nature of sequencing. Given the scarcity of public temporal perturbation benchmarks, we establish a rigorous Leave-One-Intervention-Out (LOO) protocol to assess Out-Of-Distribution (OOD) generalization to unseen gene knockouts. While the LOO protocol enforces discrete OOD at the intervention level, it does not distinguish the functional proximity between held-out and training genes.
We therefore quantify OOD severity by measuring Gene Ontology (GO)~\citep{ashburner2000gene} embedding similarity between the test gene and the training perturbations, inducing a continuous notion of distribution shift.
Furthermore, we also report performance on fold-specific differentially expressed (DE) genes to focus evaluation on perturbation-relevant signals. DE genes are computed within each training fold only, and then applied unchanged to the corresponding test fold, preventing any test-set leakage. See \Cref{app:hispc} for preprocessing details and statistics.

\textit{Baselines.}
We benchmark against a diverse set of baselines designed to reflect the main modeling paradigms for single-cell perturbation prediction. These include latent causal models such as SAMSVAE~\citep{bereket2023modelling} and DiscrepancyVAE~\citep{zhang2023identifiability}, which explicitly posit latent generative structures under interventions; temporal dynamical models, including \textsc{RENGE}~\citep{ishikawa2023renge} and scNode~\citep{zhang2024scnode}, which model state evolution over time but are not necessarily grounded in a causal latent formulation;
large-scale foundation models such as \textsc{UCE}~\citep{rosen2023universal} and \textsc{scGPT}~\citep{cui2024scgpt}, which leverage pretrained representations and temporal conditioning;
and the classical Elastic Net~\citep{zou2005regularization}, serving as a strong linear baseline.
Together, these baselines span a broad spectrum of assumptions regarding temporal conditioning, dynamical structure, and causal modeling, enabling a comprehensive and capability-aware comparison.

\begin{table*}[!t]
\centering
\small
\setlength{\tabcolsep}{5pt}
\renewcommand{\arraystretch}{1.15}
\caption{\textbf{Comprehensive evaluation on the hiPSC temporal CRISPR dataset (LOO mode).}
We first summarize each method's \emph{modeling capabilities} along three dimensions:
\textit{Time} (use of temporal indices),
\textit{Dyn.} (explicit state dynamics),
and \textit{Causal} (latent causal generative modeling).
We then report quantitative performance on \textit{all genes}
(RMSE, $R^2$, MAE) and on \textit{differentially expressed (DE) genes}
($\Delta$ Pearson, AUC-ROC, AUPRC).
For $R^2$, values exceeding $0.95$ are reported as ``$>0.95$'' as they indicate uniformly strong fits
with limited discriminative power.
Entries marked as ``--'' denote degenerate or non-informative metrics.}
\label{tab:hipsc_full_eval}
\begin{adjustbox}{width=0.9\textwidth,center}
\begin{tabular}{lccc|ccc|ccc}
\toprule
Model
& Time & Dyn. & Causal
& \multicolumn{3}{c|}{All genes}
& \multicolumn{3}{c}{DE genes} \\
\cmidrule(lr){2-4}
\cmidrule(lr){5-7}
\cmidrule(lr){8-10}
& & &
& RMSE $\downarrow$ & $R^2 \uparrow$ & MAE $\downarrow$
& $\Delta$ Pearson $\uparrow$ & AUC-ROC $\uparrow$ & AUPRC $\uparrow$ \\
\midrule
\textbf{ElasticNet}~\citep{zou2005regularization}
& \xmark & \xmark & \xmark
& 0.105$\pm$0.053 & $>0.95$ & 0.011$\pm$0.004
& \textemdash & 0.925$\pm$0.014 & 0.625$\pm$0.055 \\

\textbf{scGPT}~\citep{cui2024scgpt}
& \cmark & \xmark & \xmark
& 0.086$\pm$0.068 & $>0.95$ & 0.010$\pm$0.006
& 0.516$\pm$0.022 & 0.919$\pm$0.028 & 0.092$\pm$0.071 \\

\textbf{UCE}~\citep{rosen2023universal}
& \cmark & \xmark & \xmark
& 0.056$\pm$0.024 & $>0.95$ & 0.041$\pm$0.005
& 0.514$\pm$0.023 & 0.933$\pm$0.028 & 0.101$\pm$0.068 \\

\textbf{SAMS-VAE}~\citep{bereket2023modelling}
& \xmark & \xmark & \cmark
& 0.182$\pm$0.017 & $>0.95$ & 0.082$\pm$0.006
& 0.621$\pm$0.017 & 0.941$\pm$0.011 & 0.534$\pm$0.030 \\

\textbf{Discrepancy-VAE}~\citep{zhang2023identifiability}
& \xmark & \xmark & \cmark
& 0.129$\pm$0.046 & $>0.95$ & 0.111$\pm$0.016
& 0.232$\pm$0.017 & 0.622$\pm$0.081 & 0.322$\pm$0.083 \\

\textbf{RENGE}~\citep{ishikawa2023renge}
& \cmark & \cmark & \xmark
& 0.857$\pm$0.313 & \textemdash & \textemdash
& \textemdash & 0.594$\pm$0.054 & 0.970$\pm$0.056 \\

\textbf{scNODE}~\citep{zhang2024scnode}
& \cmark & \cmark & \xmark
& 0.106$\pm$0.053 & 0.876$\pm$0.119 & 0.183$\pm$0.081
& 0.455$\pm$0.016 & \textemdash & \textemdash \\

\textbf{CellOT}~\citep{bunne2023learning}
& \cmark & \cmark & \xmark
& 0.785$\pm$0.062 & $>0.95$ & 0.614$\pm$0.066
& 0.161$\pm$0.191 & 0.785$\pm$0.067 & 0.413$\pm$0.083 \\

\textbf{LFADS}~\citep{pandarinath2018inferring}
& \cmark & \cmark & \xmark
& 0.531$\pm$0.015 & $>0.95$ & 0.275$\pm$0.015
& 0.401$\pm$0.189 & 0.854$\pm$0.042 & 0.528$\pm$0.087 \\

\textsc{\textbf{CITE-VAE}} (ours)
& \cmark & \cmark & \cmark
& 0.086$\pm$0.054 & $>0.95$ & 0.033$\pm$0.015
& 0.691$\pm$0.007 & 0.931$\pm$0.020 & 0.617$\pm$0.061 \\
\bottomrule
\end{tabular}
\end{adjustbox}
\end{table*}

\begin{figure*}[h] \centering \includegraphics[width=0.85\textwidth]{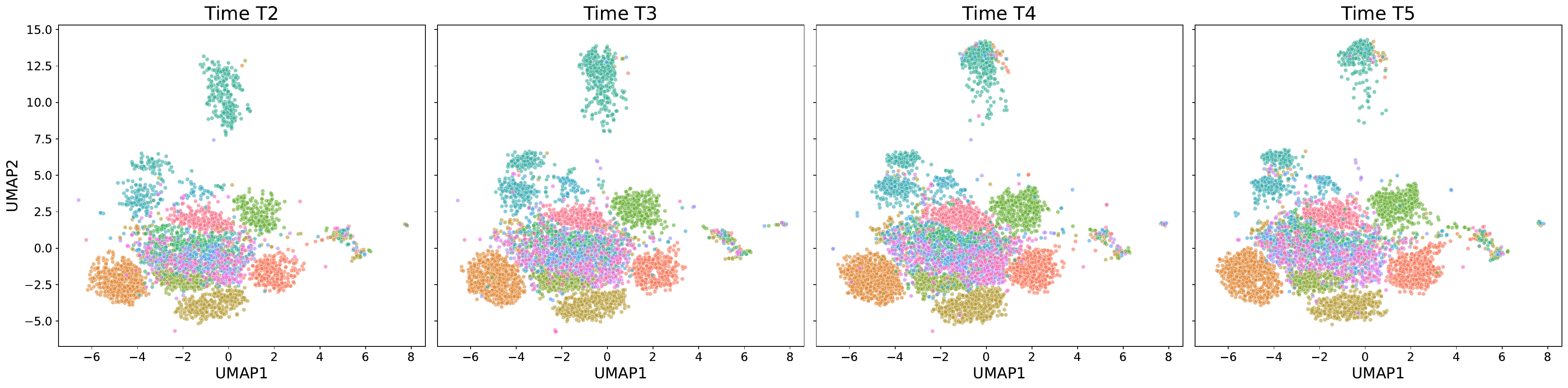} \caption{\textbf{Latent-space organization under temporal perturbations.}
UMAP of $\mathbf{z}_\nu$ for the 22 training genes across four time points, which shows perturbation-level separability and consistent temporal displacement across genes.}
\label{fig:umap}
\end{figure*}

\textit{Evaluation.} We evaluate pseudobulk-level prediction accuracy both genome-wide (all genes) and on two fold-specific differentially expressed (DE) gene subsets (top-100 DE genes per fold) across 3 seeds, with reporting mean and standard deviation.
Reporting genome-wide metrics assesses overall transcriptome fidelity, whereas DE-restricted metrics focus the evaluation on perturbation-responsive signals that are most relevant for interpreting intervention effects. To prevent information leakage, DE genes are identified using training interventions only within each cross-validation fold and then kept fixed when evaluating the held-out perturbation. We include all genes and DE-genes to probe robustness to the stringency of gene selection and to separate performance on moderately versus strongly perturbed programs.

\textit{Quantitative Results.}
The results are summarized in \Cref{tab:hipsc_full_eval}.
The evaluation is intentionally multi-faceted: genome-wide metrics assess overall transcriptome fidelity, whereas DE-focused metrics probe whether models capture perturbation-responsive programs. Across all metrics, \textsc{CITE-VAE} exhibits consistently strong and non-degenerate performance.
Compared with classical linear baselines such as ElasticNet~\citep{zou2005regularization}, which can achieve strong genome-wide fits but lack explicit temporal or causal structure, \textsc{CITE-VAE} provides substantially improved performance on DE-focused criteria that are central to perturbation effect prediction. When compared to methods grounded in explicit causal formulations, such as \textsc{SAMS-VAE} and \textsc{Discrepancy-VAE}, \textsc{CITE-VAE} achieves systematically better results on DE-sensitive metrics while remaining competitive on genome-wide measures, highlighting the benefit of jointly modeling causal structure and temporal dynamics. Moreover, \textsc{CITE-VAE} remains competitive even relative to large foundation models such as \textsc{UCE} and \textsc{scGPT}.
Although these models leverage substantially larger training data and architectures—rendering the comparison inherently conservative, \textsc{CITE-VAE} attains comparable genome-wide accuracy and stronger DE-focused performance.
Overall, these results indicate that principled causal and dynamical modeling can effectively complement, and in some cases rival, scale-driven approaches for temporal perturbation effect prediction.

\textit{Qualitative Results.} We visualize the intervention-responsive latent states $\mathbf{z}_\nu$ for the 22 training perturbations across four time points (\Cref{fig:umap}). The embeddings exhibit distinct cluster-wise separation, confirming that $\mathbf{z}_\nu$ retains the high-dimensional heterogeneity of gene-specific responses. Moreover, these clusters demonstrate smooth and directionally consistent displacement over time, suggesting that \textsc{CITE-VAE} has identified a universal, system-level dynamical operator rather than overfitting to snapshot-specific features.  This structured latent organization provides a robust geometric foundation for causal extrapolation to the held-out perturbation. 

\section{Conclusion and Limitations}
\label{sec:Conclusion}
We studied the problem of learning perturbation-driven cellular dynamics from single-cell snapshots. Motivated by the fact that perturbation effects are inherently both \emph{latent} and \emph{dynamical}, we proposed a latent dynamical causal generative model that captures unobserved cellular programs and their temporal evolution under interventions. We provided an identifiability analysis showing that, under suitable conditions, latent causal variables are recoverable up to a trivial equivalence class when likelihood-based learning is combined with invariant alignment across environments. Guided by this analysis, we developed \textsc{CITE-VAE}, which translates population-level identifiability conditions into concrete modeling and optimization choices. Experiments on benchmark datasets and real-world single-cell perturbation studies demonstrate improved generalization to unseen perturbations. 

A limitation of this work, shared with most studies on causal representation learning, lies in the assumptions required for the theoretical identifiability analysis. In general, recovering latent causal variables without assumptions is impossible. Consequently, as in prior work, we impose a set of conditions, e.g., the proposed latent dynamical causal model defined in Eqs.~\ref{eq:dgp_noise_nu}-\ref{eq:dgp_x},  as well as assumptions~\ref{asm:ts_rank}-\ref{asm:ts_isolation} to achieve  identifiability. This limitation is inherent to causal representation learning and is not unique to this work. In addition, our empirical results on real-world 
data indicate that the proposed method performs robustly in practice,
suggesting that the theoretical insights remain informative even when the
assumptions are only approximately satisfied. Finally, extending our framework to more complex settings is an important direction for future work.

\section*{Acknowledgment}
The work is partly supported by the Responsible AI Research Centre and ARC Discovery Project DP240103278. 

\section*{Limitations and Ethical Considerations}
One limitation of this work lies in the assumptions imposed on the underlying generative process and those required for identifiability. These assumptions, while standard in causal representation learning, might not hold in some real cases and are discussed in detail in~\Cref{sec:Conclusion}. We confirm that this study complies with the ethical standards of SIGKDD, with no involvement of private or sensitive information.

\section*{GenAI Disclosure}
The authors used generative AI tools solely for English-language editing to improve clarity, grammar, and style. These tools were not used to develop the scientific ideas, design the methodology, conduct experiments, analyze results, or write technical content. All experimental results, figures, and tables were produced by the authors without the use of generative AI.


\clearpage

\bibliographystyle{ACM-Reference-Format}
\bibliography{kdd2026/reference}

@article{von2025learning,
  title={Learning Explicit Single-Cell Dynamics Using ODE Representations},
  author={von Bassewitz, Jan-Philipp and Pervez, Adeel and Fumero, Marco and Robinson, Matthew and Karaletsos, Theofanis and Locatello, Francesco},
  journal={arXiv preprint arXiv:2510.02903},
  year={2025}
}

@article{tejada2025causal,
  title={Causal machine learning for single-cell genomics},
  author={Tejada-Lapuerta, Alejandro and Bertin, Paul and Bauer, Stefan and Aliee, Hananeh and Bengio, Yoshua and Theis, Fabian J},
  journal={Nature Genetics},
  pages={1--12},
  year={2025},
  publisher={Nature Publishing Group US New York}
}

@article{mao2025halo,
  title={HALO: hierarchical causal modeling for single cell multi-omics data},
  author={Mao, Haiyi and Jia, Minxue and Di, Marissa and Valenzi, Eleanor and Cai, Xiaoyu Tracy and Lafyatis, Robert and Zhang, Kun and Benos, Panayiotis V},
  journal={Nature Communications},
  volume={16},
  number={1},
  pages={8892},
  year={2025},
  publisher={Nature Publishing Group UK London}
}

@article{wu2025unveiling,
  title={Unveiling causal regulatory mechanisms through cell-state parallax},
  author={Wu, Alexander P and Singh, Rohit and Walsh, Christopher A and Berger, Bonnie},
  journal={Nature Communications},
  volume={16},
  number={1},
  pages={8096},
  year={2025},
  publisher={Nature Publishing Group UK London}
}

@article{singh2024causal,
  title={Causal gene regulatory analysis with RNA velocity reveals an interplay between slow and fast transcription factors},
  author={Singh, Rohit and Wu, Alexander P and Mudide, Anish and Berger, Bonnie},
  journal={Cell systems},
  volume={15},
  number={5},
  pages={462--474},
  year={2024},
  publisher={Elsevier}
}

@article{borsari2025chronode,
  title={The chronODE framework for modelling multi-omic time series with ordinary differential equations and machine learning},
  author={Borsari, Beatrice and Frank, Mor and Wattenberg, Eve S and Xu, Ke and Liu, Susanna X and Yu, Xuezhu and Gerstein, Mark},
  journal={Nature Communications},
  volume={16},
  number={1},
  pages={7021},
  year={2025},
  publisher={Nature Publishing Group UK London}
}

@article{moscardo2025multi,
  title={Multi-omic network inference from time-series data},
  author={Moscard{\'o} Garc{\'\i}a, Mar{\'\i}a and Aalto, Atte and Montanari, Arthur N and Skupin, Alexander and Gon{\c{c}}alves, Jorge},
  journal={npj Systems Biology and Applications},
  volume={11},
  number={1},
  pages={114},
  year={2025},
  publisher={Nature Publishing Group UK London}
}

@article{norman2019exploring,
  title={Exploring genetic interaction manifolds constructed from rich single-cell phenotypes},
  author={Norman, Thomas M and Horlbeck, Max A and Replogle, Joseph M and Ge, Alex Y and Xu, Albert and Jost, Marco and Gilbert, Luke A and Weissman, Jonathan S},
  journal={Science},
  volume={365},
  number={6455},
  pages={786--793},
  year={2019},
  publisher={American Association for the Advancement of Science}
}

@article{zhang2023identifiability,
  title={Identifiability guarantees for causal disentanglement from soft interventions},
  author={Zhang, Jiaqi and Greenewald, Kristjan and Squires, Chandler and Srivastava, Akash and Shanmugam, Karthikeyan and Uhler, Caroline},
  journal={Advances in Neural Information Processing Systems},
  volume={36},
  pages={50254--50292},
  year={2023}
}

@article{von2021self,
  title={Self-supervised learning with data augmentations provably isolates content from style},
  author={Von K{\"u}gelgen, Julius and Sharma, Yash and Gresele, Luigi and Brendel, Wieland and Sch{\"o}lkopf, Bernhard and Besserve, Michel and Locatello, Francesco},
  journal={Advances in neural information processing systems},
  volume={34},
  pages={16451--16467},
  year={2021}
}

@article{hyvarinen1999nonlinear,
  title={Nonlinear independent component analysis: Existence and uniqueness results},
  author={Hyv{\"a}rinen, Aapo and Pajunen, Petteri},
  journal={Neural networks},
  volume={12},
  number={3},
  pages={429--439},
  year={1999},
  publisher={Elsevier}
}

@incollection{hyvarinen2001independent,
  title={Independent component analysis},
  author={Hyv{\"a}rinen, Aapo and Hurri, Jarmo and Hoyer, Patrik O},
  booktitle={Natural Image Statistics: A Probabilistic Approach to Early Computational Vision},
  pages={151--175},
  year={2001},
  publisher={Springer}
}

@article{liu2022identifying,
  title={Identifying weight-variant latent causal models},
  author={Liu, Yuhang and Zhang, Zhen and Gong, Dong and Gong, Mingming and Huang, Biwei and van den Hengel, Anton and Zhang, Kun and Shi, Javen Qinfeng},
  journal={Journal of Machine Learning Research},
  volume={27},
  number={4},
  pages={1--49},
  year={2026}
}

@inproceedings{
liu2026i,
title={I Predict Therefore I Am: Is Next Token Prediction Enough to Learn Human-Interpretable Concepts from Data?},
author={Yuhang Liu and Dong Gong and Yichao Cai and Erdun Gao and Zhen Zhang and Biwei Huang and Mingming Gong and Anton van den Hengel and Javen Qinfeng Shi},
booktitle={The Fourteenth International Conference on Learning Representations},
year={2026}
}

@inproceedings{
liu2026beyond,
title={Beyond {DAG}s: A Latent Partial Causal Model for Multimodal Learning},
author={Yuhang Liu and Zhen Zhang and Dong Gong and Erdun Gao and Biwei Huang and Mingming Gong and Anton van den Hengel and Kun Zhang and Javen Qinfeng Shi},
booktitle={The Fourteenth International Conference on Learning Representations},
year={2026},
}

@article{
liu2025latent,
title={Latent Covariate Shift: Unlocking Partial Identifiability for Multi-Source Domain Adaptation},
author={Yuhang Liu and Zhen Zhang and Dong Gong and Mingming Gong and Biwei Huang and Anton van den Hengel and Kun Zhang and Javen Qinfeng Shi},
journal={Transactions on Machine Learning Research},
issn={2835-8856},
year={2025},
}

@article{liu2024towards,
  title={Towards Identifiable Latent Additive Noise Models},
  author={Liu, Yuhang and Zhang, Zhen and Gong, Dong and Gao, Erdun and Huang, Biwei and Gong, Mingming and Hengel, Anton van den and Zhang, Kun and Shi, Javen Qinfeng},
  journal={arXiv preprint arXiv:2403.15711},
  year={2024}
}

@article{hyvarinen2016unsupervised,
  title={Unsupervised feature extraction by time-contrastive learning and nonlinear ica},
  author={Hyvarinen, Aapo and Morioka, Hiroshi},
  journal={Advances in neural information processing systems},
  volume={29},
  year={2016}
}

@inproceedings{hyvarinen2017nonlinear,
  title={Nonlinear ICA of temporally dependent stationary sources},
  author={Hyvarinen, Aapo and Morioka, Hiroshi},
  booktitle={Artificial intelligence and statistics},
  pages={460--469},
  year={2017},
  organization={PMLR}
}

@inproceedings{
liu2023identifiable,
title={Identifiable Latent Polynomial Causal Models through the Lens of Change},
author={Yuhang Liu and Zhen Zhang and Dong Gong and Mingming Gong and Biwei Huang and Anton van den Hengel and Kun Zhang and Javen Qinfeng Shi},
booktitle={The Twelfth International Conference on Learning Representations},
year={2024}
}

@article{sorrenson2020disentanglement,
  title={Disentanglement by nonlinear ica with general incompressible-flow networks (gin)},
  author={Sorrenson, Peter and Rother, Carsten and K{\"o}the, Ullrich},
  journal={arXiv preprint arXiv:2001.04872},
  year={2020}
}

@inproceedings{khemakhem2020variational,
  title={Variational autoencoders and nonlinear ica: A unifying framework},
  author={Khemakhem, Ilyes and Kingma, Diederik and Monti, Ricardo and Hyvarinen, Aapo},
  booktitle={International conference on artificial intelligence and statistics},
  pages={2207--2217},
  year={2020},
  organization={PMLR}
}

@misc{kingma2013auto,
  title={Auto-encoding variational bayes},
  author={Kingma, Diederik P and Welling, Max and others},
  year={2013},
  publisher={Banff, Canada}
}

@inproceedings{rezende2014stochastic,
  title={Stochastic backpropagation and approximate inference in deep generative models},
  author={Rezende, Danilo Jimenez and Mohamed, Shakir and Wierstra, Daan},
  booktitle={International conference on machine learning},
  pages={1278--1286},
  year={2014},
  organization={PMLR}
}

@inproceedings{lippe2022citris,
  title={Citris: Causal identifiability from temporal intervened sequences},
  author={Lippe, Phillip and Magliacane, Sara and L{\"o}we, Sindy and Asano, Yuki M and Cohen, Taco and Gavves, Stratis},
  booktitle={International Conference on Machine Learning},
  pages={13557--13603},
  year={2022},
  organization={PMLR}
}

@article{lotfollahi2023predicting,
  title={Predicting cellular responses to complex perturbations in high-throughput screens},
  author={Lotfollahi, Mohammad and Klimovskaia Susmelj, Anna and De Donno, Carlo and Hetzel, Leon and Ji, Yuge and Ibarra, Ignacio L and Srivatsan, Sanjay R and Naghipourfar, Mohsen and Daza, Riza M and Martin, Beth and others},
  journal={Molecular systems biology},
  volume={19},
  number={6},
  pages={e11517},
  year={2023}
}

@article{de2025interpretable,
  title={Interpretable Causal Representation Learning for Biological Data in the Pathway Space},
  author={de la Fuente, Jesus and Lehmann, Robert and Ruiz-Arenas, Carlos and Voges, Jan and Marin-Go{\~n}i, Irene and Martinez-de-Morentin, Xabier and Gomez-Cabrero, David and Ochoa, Idoia and Tegner, Jesper and Lagani, Vincenzo and others},
  journal={arXiv preprint arXiv:2506.12439},
  year={2025}
}

@inproceedings{ahuja2023interventional,
  title={Interventional causal representation learning},
  author={Ahuja, Kartik and Mahajan, Divyat and Wang, Yixin and Bengio, Yoshua},
  booktitle={International conference on machine learning},
  pages={372--407},
  year={2023},
  organization={PMLR}
}

@inproceedings{
lopez2022learning,
title={Learning Causal Representations of Single Cells via Sparse Mechanism Shift Modeling},
author={Romain Lopez and Natasa Tagasovska and Stephen Ra and Kyunghyun Cho and Jonathan Pritchard and Aviv Regev},
booktitle={NeurIPS 2022 Workshop on Causality for Real-world Impact},
year={2022},
url={https://openreview.net/forum?id=gdTXCy7fZf7}
}

@inproceedings{
lachapelle2022disentanglement,
title={Disentanglement via Mechanism Sparsity Regularization: A New Principle for Nonlinear {ICA}},
author={Sebastien Lachapelle and Pau Rodriguez and Yash Sharma and Katie E Everett and R{\'e}mi LE PRIOL and Alexandre Lacoste and Simon Lacoste-Julien},
booktitle={First Conference on Causal Learning and Reasoning},
year={2022},
url={https://openreview.net/forum?id=dHsFFekd_-o}
}

@article{scholkopf2021toward,
  title={Toward causal representation learning},
  author={Sch{\"o}lkopf, Bernhard and Locatello, Francesco and Bauer, Stefan and Ke, Nan Rosemary and Kalchbrenner, Nal and Goyal, Anirudh and Bengio, Yoshua},
  journal={Proceedings of the IEEE},
  volume={109},
  number={5},
  pages={612--634},
  year={2021},
  publisher={IEEE}
}

@article{replogle2020combinatorial,
  title={Combinatorial single-cell CRISPR screens by direct guide RNA capture and targeted sequencing},
  author={Replogle, Joseph M and Norman, Thomas M and Xu, Albert and Hussmann, Jeffrey A and Chen, Jin and Cogan, J Zachery and Meer, Elliott J and Terry, Jessica M and Riordan, Daniel P and Srinivas, Niranjan and others},
  journal={Nature biotechnology},
  volume={38},
  number={8},
  pages={954--961},
  year={2020},
  publisher={Nature Publishing Group US New York}
}

@article{von2023nonparametric,
  title={Nonparametric identifiability of causal representations from unknown interventions},
  author={von K{\"u}gelgen, Julius and Besserve, Michel and Wendong, Liang and Gresele, Luigi and Keki{\'c}, Armin and Bareinboim, Elias and Blei, David and Sch{\"o}lkopf, Bernhard},
  journal={Advances in Neural Information Processing Systems},
  volume={36},
  pages={48603--48638},
  year={2023}
}

@article{gao2025domain,
  title={Domain generalization via content factors isolation: a two-level latent variable modeling approach},
  author={Gao, Erdun and Bondell, Howard and Huang, Shaoli and Gong, Mingming},
  journal={Machine Learning},
  volume={114},
  number={4},
  pages={1--33},
  year={2025},
  publisher={Springer}
}

@inproceedings{zimmermann2021contrastive,
  title={Contrastive learning inverts the data generating process},
  author={Zimmermann, Roland S and Sharma, Yash and Schneider, Steffen and Bethge, Matthias and Brendel, Wieland},
  booktitle={International conference on machine learning},
  pages={12979--12990},
  year={2021},
  organization={PMLR}
}

@article{ishikawa2023renge,
  title={RENGE infers gene regulatory networks using time-series single-cell RNA-seq data with CRISPR perturbations},
  author={Ishikawa, Masato and Sugino, Seiichi and Masuda, Yoshie and Tarumoto, Yusuke and Seto, Yusuke and Taniyama, Nobuko and Wagai, Fumi and Yamauchi, Yuhei and Kojima, Yasuhiro and Kiryu, Hisanori and others},
  journal={Communications Biology},
  volume={6},
  number={1},
  pages={1290},
  year={2023},
  publisher={Nature Publishing Group UK London}
}

@article{cui2024scgpt,
  title={scGPT: toward building a foundation model for single-cell multi-omics using generative AI},
  author={Cui, Haotian and Wang, Chloe and Maan, Hassaan and Pang, Kuan and Luo, Fengning and Duan, Nan and Wang, Bo},
  journal={Nature methods},
  volume={21},
  number={8},
  pages={1470--1480},
  year={2024},
  publisher={Nature Publishing Group US New York}
}

@article{rosen2023universal,
  title={Universal cell embeddings: A foundation model for cell biology},
  author={Rosen, Yanay and Roohani, Yusuf and Agarwal, Ayush and Samotor{\v{c}}an, Leon and Tabula Sapiens Consortium and Quake, Stephen R and Leskovec, Jure},
  journal={bioRxiv},
  pages={2023--11},
  year={2023},
  publisher={Cold Spring Harbor Laboratory}
}

@article{zou2005regularization,
  title={Regularization and variable selection via the elastic net},
  author={Zou, Hui and Hastie, Trevor},
  journal={Journal of the Royal Statistical Society Series B: Statistical Methodology},
  volume={67},
  number={2},
  pages={301--320},
  year={2005},
  publisher={Oxford University Press}
}

@article{lippe2022causal,
  title={Causal representation learning for instantaneous and temporal effects in interactive systems},
  author={Lippe, Phillip and Magliacane, Sara and L{\"o}we, Sindy and Asano, Yuki M and Cohen, Taco and Gavves, Efstratios},
  journal={arXiv preprint arXiv:2206.06169},
  year={2022}
}

@article{ashburner2000gene,
  title={Gene ontology: tool for the unification of biology},
  author={Ashburner, Michael and Ball, Catherine A and Blake, Judith A and Botstein, David and Butler, Heather and Cherry, J Michael and Davis, Allan P and Dolinski, Kara and Dwight, Selina S and Eppig, Janan T and others},
  journal={Nature genetics},
  volume={25},
  number={1},
  pages={25--29},
  year={2000},
  publisher={Nature Publishing Group}
}

@article{bereket2023modelling,
  title={Modelling cellular perturbations with the sparse additive mechanism shift variational autoencoder},
  author={Bereket, Michael and Karaletsos, Theofanis},
  journal={Advances in Neural Information Processing Systems},
  volume={36},
  pages={1--12},
  year={2023}
}

@article{zhang2024scnode,
  title={scNODE: generative model for temporal single cell transcriptomic data prediction},
  author={Zhang, Jiaqi and Larschan, Erica and Bigness, Jeremy and Singh, Ritambhara},
  journal={Bioinformatics},
  volume={40},
  number={Supplement\_2},
  pages={ii146--ii154},
  year={2024},
  publisher={Oxford University Press}
}

@article{balmas2025single,
  title={Single cell transcriptional perturbome in pluripotent stem cell models},
  author={Balmas, Elisa and Ratto, Maria L and Snijders, Kirsten E and Becca, Silvia and Liaci, Carla and Ricca, Irene and Merlo, Giorgio R and Calogero, Raffaele A and Alessandr{\`\i}, Luca and Mendjan, Sasha and others},
  journal={Molecular Systems Biology},
  volume={22},
  number={2},
  pages={179},
  year={2025}
}

@article{bunne2023learning,
  title={Learning single-cell perturbation responses using neural optimal transport},
  author={Bunne, Charlotte and Stark, Stefan G and Gut, Gabriele and Del Castillo, Jacobo Sarabia and Levesque, Mitch and Lehmann, Kjong-Van and Pelkmans, Lucas and Krause, Andreas and R{\"a}tsch, Gunnar},
  journal={Nature methods},
  volume={20},
  number={11},
  pages={1759--1768},
  year={2023},
  publisher={Nature Publishing Group US New York}
}

@article{pandarinath2018inferring,
  title={Inferring single-trial neural population dynamics using sequential auto-encoders},
  author={Pandarinath, Chethan and O’Shea, Daniel J and Collins, Jasmine and Jozefowicz, Rafal and Stavisky, Sergey D and Kao, Jonathan C and Trautmann, Eric M and Kaufman, Matthew T and Ryu, Stephen I and Hochberg, Leigh R and others},
  journal={Nature methods},
  volume={15},
  number={10},
  pages={805--815},
  year={2018},
  publisher={Nature Publishing Group US New York}
}

@inproceedings{yang2021causalvae,
  title={Causalvae: Disentangled representation learning via neural structural causal models},
  author={Yang, Mengyue and Liu, Furui and Chen, Zhitang and Shen, Xinwei and Hao, Jianye and Wang, Jun},
  booktitle={Proceedings of the IEEE/CVF conference on computer vision and pattern recognition},
  pages={9593--9602},
  year={2021}
}

@article{jiang2026makes,
  title={What Makes a Representation Good for Single-Cell Perturbation Prediction?},
  author={Jiang, Wenkang and Liu, Yuhang and Cai, Yichao and Gao, Erdun and Dong, Jiayi and Abbasnejad, Ehsan and Yao, Lina and Shi, Javen Qinfeng},
  journal={arXiv preprint arXiv:2605.19343},
  year={2026}
}

\appendix

\appendix

\section{Proof of Theorem~\ref{thm:ts_identifiability}}
\label{app:proof_identifiability}
\begin{proof}
We proceed in three steps: (I) reduce transition matching to innovation matching,
(II) identify the innovations $(\mathbf n_\iota^{t},\mathbf n_\nu^{t})$,
and (III) lift innovation identifiability to state identifiability.

\textbf{Step I: From $(\mathbf x^{t-1},\mathbf x^{t})$ to $(\mathbf z^t,\mathbf n^{t})$ via a triangular Jacobian.}
By invertibility of $\mathbf{g}$, define $\mathbf z^t=\mathbf{g}^{-1}(\mathbf x^t)$.
Define the innovation reconstruction map
\begin{align}
\mathbf n^{t}
&:=
\mathbf z^{t}-\mathbf F(\mathbf z^{t-1},\mathbf u),\\
\mathbf F(\mathbf z^{t-1},\mathbf u)&:=
\big(\,\mathbf{f}_\iota(\mathbf z_\iota^{t-1}),\ f_{\nu,1}(\mathbf z_{\mathrm{pa}(1)}^{t-1},\mathbf u),\ldots,f_{\nu,d_\nu}(\mathbf z_{\mathrm{pa}(d_\nu)}^{t-1},\mathbf u)\,\big).
\end{align}
Then
\begin{equation}
\mathbf n^{t}
=
\mathbf{g}^{-1}(\mathbf x^{t})-\mathbf F\big(\mathbf{g}^{-1}(\mathbf x^{t-1}),\mathbf u\big).
\label{eq:ts_n_from_x}
\end{equation}
Consider the change of variables
\begin{equation}
(\mathbf x^{t-1},\mathbf x^{t})
\ \mapsto\
(\mathbf z^{t-1},\mathbf n^{t}).
\end{equation}
Its Jacobian matrix is block lower triangular:
\begin{equation}
\mathbf{J} =
\begin{pmatrix}
\mathbf{J}_{\mathbf{g}^{-1}}(\mathbf x^{t-1}) & \mathbf 0\\
\ast & \mathbf{J}_{\mathbf{g}^{-1}}(\mathbf x^{t})
\end{pmatrix},
\end{equation}
hence
\begin{equation}
\big|\det \mathbf{J}\big|
=
\big|\det \mathbf{J}_{\mathbf{g}^{-1}}(\mathbf x^{t-1})\big|
\cdot
\big|\det \mathbf{J}_{\mathbf{g}^{-1}}(\mathbf x^{t})\big|.
\label{eq:ts_triangular_det}
\end{equation}
According to Eqs.~\ref{eq:dgp_noise_nu}-\ref{eq:dgp_znu}, $\mathbf n^{t}\perp \mathbf z^{t-1} \mid \mathbf u$ and
$p(\mathbf n^{t}\mid \mathbf z^{t-1},\mathbf u)=p(\mathbf n^{t}\mid \mathbf u)
=p(\mathbf n_\iota^{t})\,p(\mathbf n_\nu^{t}\mid \mathbf u)$.
Therefore the transition density factors as
\begin{align}
p_\theta(\mathbf x^{t}\mid \mathbf x^{t-1},\mathbf u)
&=
p_\theta(\mathbf n^{t}\mid \mathbf u)\cdot \big|\det \mathbf{J}_{\mathbf{g}^{-1}}(\mathbf x^{t})\big|
\label{eq:ts_cov_transition}
\end{align}
where the term $\big|\det \mathbf{J}_{\mathbf{g}^{-1}}(\mathbf x^{t})\big|$ is independent of $\mathbf u$.
Applying the same identity to $\hat\theta$ and using the transition matching assumption
Eq.~\eqref{eq:ts_match_transition}, we conclude that for all environments
\begin{equation}
\log \frac{p_\theta(\mathbf n^{t}\mid \mathbf u)}{p_\theta(\mathbf n^{t}\mid \mathbf u_0)}
=
\log \frac{p_{\hat\theta}(\hat{\mathbf n}^{t}\mid \mathbf u)}{p_{\hat\theta}(\hat{\mathbf n}^{t}\mid \mathbf u_0)}
\quad \text{a.s.}
\label{eq:ts_ratio_n}
\end{equation}
Moreover, since $p(\mathbf n^{t}\mid \mathbf u)=p(\mathbf n_\iota^{t})p(\mathbf n_\nu^{t}\mid \mathbf u)$
and $p(\mathbf n_\iota^{t})$ is $\mathbf u$-invariant, differencing across environments removes the $\iota$-part,
leaving an identity involving only $\mathbf n_\nu^{t}$.

\textbf{Step II: Identifiability of $\mathbf n^{t}$.}
\paragraph{(II-a) Component-wise identifiability of $\mathbf n_\nu^{t}$.}
Under diagonal Gaussian conditionals, the log-ratio in Eq.~\eqref{eq:ts_ratio_n} is linear in the sufficient statistics
$\mathbf T_\nu(\mathbf n_\nu^{t})=(\mathbf n_\nu^{t},(\mathbf n_\nu^{t})^{\odot 2})$.
Stacking across $2d_\nu$ environments and using Assumption~\ref{asm:ts_rank}, the standard nonlinear-ICA
rank argument \citep{khemakhem2020variational,sorrenson2020disentanglement} yields an affine relation
\begin{equation}
\mathbf T_\nu(\mathbf n_\nu^{t})
=
\mathbf A_\nu\,\hat{\mathbf T}_\nu(\hat{\mathbf n}_\nu^{t})+\mathbf c_\nu,
\label{eq:ts_affine_T}
\end{equation}
with $\mathbf A_\nu$ full rank. Since the coordinates of $\mathbf n_\nu^{t}$ are mutually independent Gaussians,
Eq.~\eqref{eq:ts_affine_T} forces each $n_{\nu,j}^{t}$ to depend on exactly one coordinate of
$\hat{\mathbf n}_\nu^{t}$ up to an affine transform~\citep{sorrenson2020disentanglement}. Hence there exist a permutation $\pi$,
nonzero scalars $\{a_j\}$ and constants $\{b_j\}$ such that
\begin{equation}
n_{\nu,j}^{t}
=
a_j\,\hat n_{\nu,\pi(j)}^{t}+b_j,\qquad j=1,\ldots,d_\nu.
\label{eq:ts_id_n_nu}
\end{equation}

\paragraph{(II-b) Block identifiability of $\mathbf n_\iota^{t}$ via temporal alignment.}
Recall the alignment objective in Assumption~\ref{asm:ts_align},
\begin{equation}
\mathcal L_{\mathrm{align}}
=
\mathbb E\Big[
\big\|
\mathbf{\hat f}_\iota(\mathbf x^{t,(\mathbf u)})
-
\mathbf{\hat f}_\iota(\mathbf x^{t,(\mathbf u_0)})
\big\|_2^2
\Big],
\end{equation}
defined under the coupled cross-environment construction where
$\mathbf x^{t,(\mathbf u)}$ and $\mathbf x^{t,(\mathbf u_0)}$ share the same invariant state
$\mathbf z_\iota^t$ but differ only through independent responsive randomness
(e.g., independent draws of $\mathbf n_\nu^t$ under $\mathbf u$ and $\mathbf u_0$).
Since $\mathcal L_{\mathrm{align}}=0$ at the population global minimum, we have
\begin{equation}
\mathbf{\hat f}_\iota(\mathbf x^{t,(\mathbf u)})=\mathbf{\hat f}_\iota(\mathbf x^{t,(\mathbf u_0)})
\quad \text{a.s.}
\label{eq:ts_align_zero_implies_equal}
\end{equation}
for any environments $\mathbf u,\mathbf u_0$.
Fix any $t$ and any value of the invariant state $\mathbf z_\iota^t$.
Under the above coupling, Eq.~\eqref{eq:ts_align_zero_implies_equal} implies that
$\mathbf{\hat f}_\iota(\mathbf x^{t,(\mathbf u)})$ is almost surely constant with respect to the responsive
randomness (since varying the responsive randomness does not change $\mathbf z_\iota^t$).
Therefore, there exists a measurable function $h_\iota$ such that
\begin{equation}
\mathbf{\hat f}_\iota(\mathbf x^{t})=\mathbf{h}_\iota(\mathbf z_\iota^t)
\qquad\text{a.s.}
\label{eq:ts_h_iota_def}
\end{equation}

Using the invariant update $\mathbf z_\iota^{t}=\mathbf{f}_\iota(\mathbf z_\iota^{t-1})+\mathbf n_\iota^{t}$,
define the induced innovation representation
\begin{equation}
\hat{\mathbf n}_\iota^{t}
:=
\mathbf{h}_\iota(\mathbf z_\iota^{t})-\mathbf{h}_\iota(f_\iota(\mathbf z_\iota^{t-1}))
=
\mathbf{h}_\iota\!\big(f_\iota(\mathbf z_\iota^{t-1})+\mathbf n_\iota^{t}\big)-\mathbf{h}_\iota\!\big(f_\iota(\mathbf z_\iota^{t-1})\big).
\label{eq:ts_hat_n_iota_func}
\end{equation}
For each fixed $\mathbf z_\iota^{t-1}$, the mapping $\mathbf n_\iota^{t}\mapsto \hat{\mathbf n}_\iota^{t}$
is a smooth bijection on $\mathbb R^{d_\iota}$, since $\mathbf{h}_\iota$ is smooth (as $\mathbf{\hat f}_\iota$ and $\mathbf{g}$ are smooth
and invertible) and $\mathbf n_\iota^{t}$ is non-degenerate.
Moreover, $\mathbf n_\iota^{t}$ is Gaussian and independent of $\mathbf z_\iota^{t-1}$, hence
$\hat{\mathbf n}_\iota^{t}$ is also Gaussian (with possibly different mean and covariance).
By the standard Gaussian-to-Gaussian diffeomorphism argument used in the static case,
any smooth bijection mapping a non-degenerate Gaussian to a Gaussian must be affine.
Consequently, there exist a nonsingular matrix $\mathbf A_n$ and a vector $\mathbf b_n$ such that
\begin{equation}
\hat{\mathbf n}_\iota^{t} = \mathbf A_n\,\mathbf n_\iota^{t}+\mathbf b_n,
\label{eq:ts_id_n_iota}
\end{equation}
which establishes block-wise linear identifiability.

Combining Eqs.~\eqref{eq:ts_id_n_nu} and \eqref{eq:ts_id_n_iota}, we can write the global innovation relation as
\begin{equation}
\hat{\mathbf n}^{t}
=
\mathbf A_{\mathrm{glob}}\mathbf n^{t}+\mathbf b,
\qquad
\mathbf A_{\mathrm{glob}}=
\begin{pmatrix}
\mathbf A_n & \mathbf 0\\
\mathbf 0 & \mathbf P_n
\end{pmatrix},
\label{eq:ts_global_innov}
\end{equation}
where $\mathbf P_n$ is a scaling-permutation matrix.

\textbf{Step III: Identifiability of $\mathbf z^t$.}
Since $\mathbf x^t=\mathbf{g}(\mathbf z^t)$ and $\hat{\mathbf z}^t=\mathbf{\hat g}^{-1}(\mathbf x^t)$,
the composition
\begin{equation}
\mathbf h := \mathbf{g}^{-1}\circ \mathbf{\hat g}
\end{equation}
satisfies
\begin{equation}
\mathbf z^t=\mathbf h(\hat{\mathbf z}^t)
\qquad\text{for all } t.
\label{eq:ts_h_def}
\end{equation}
Because both $\mathbf{g}$ and $\mathbf{\hat g}$ are time-invariant and independent of the environment,
the map $\mathbf h$ is also independent of $t$ and $\mathbf u$.
We now characterize the structure of $\mathbf h$.

\paragraph{Step III-a: $\mathbf h$ is block-affine.}
From Step~II, the innovation vectors satisfy
\begin{equation}
\hat{\mathbf n}^{t}
=
\mathbf A_{\mathrm{glob}}\,\mathbf n^{t}+\mathbf b,
\qquad
\mathbf A_{\mathrm{glob}}
=
\begin{pmatrix}
\mathbf A_n & \mathbf 0\\
\mathbf 0 & \mathbf P_n
\end{pmatrix},
\label{eq:ts_global_noise_affine}
\end{equation}
where $\mathbf A_n$ is nonsingular and $\mathbf P_n$ is a scaling-permutation matrix.
Recalling the additive updates
\[
\mathbf z^{t}=\mathbf F(\mathbf z^{t-1},\mathbf u)+\mathbf n^{t},
\qquad
\hat{\mathbf z}^{t}=\hat{\mathbf F}(\hat{\mathbf z}^{t-1},\mathbf u)+\hat{\mathbf n}^{t},
\]
and substituting Eq.~\eqref{eq:ts_h_def} into both equations, we obtain
\begin{equation}
\mathbf h(\hat{\mathbf z}^{t})
-
\mathbf F(\mathbf h(\hat{\mathbf z}^{t-1}),\mathbf u)
=
\mathbf A_{\mathrm{glob}}
\big(
\hat{\mathbf z}^{t}
-
\hat{\mathbf F}(\hat{\mathbf z}^{t-1},\mathbf u)
\big)
+
\mathbf b.
\label{eq:ts_h_functional}
\end{equation}
Since Eq.~\eqref{eq:ts_h_functional} holds for all $\hat{\mathbf z}^{t-1},\hat{\mathbf z}^{t}$ and all environments,
and the right-hand side is affine in $\hat{\mathbf z}^{t}$, it follows that $\mathbf h$ itself must be affine.
Thus there exist a constant matrix $\mathbf M$ and vector $\mathbf c$ such that
\begin{equation}
\mathbf h(\hat{\mathbf z})=\mathbf M\hat{\mathbf z}+\mathbf c.
\label{eq:ts_h_affine}
\end{equation}
Moreover, the block-diagonal structure of $\mathbf A_{\mathrm{glob}}$ in
Eq.~\eqref{eq:ts_global_noise_affine} implies that $\mathbf M$ admits a block decomposition
\begin{equation}
\mathbf M
=
\begin{pmatrix}
\mathbf M_{\iota\iota} & \mathbf M_{\iota\nu}\\
\mathbf M_{\nu\iota} & \mathbf M_{\nu\nu}
\end{pmatrix},
\label{eq:ts_M_blocks}
\end{equation}
where the $\nu$-block $\mathbf M_{\nu\nu}$ is constrained by $\mathbf P_n$ and the
$\iota$-block is constrained by $\mathbf A_n$.

\paragraph{Step III-b: Component-wise identifiability of $\mathbf z_\nu^t$.}
We now show that $\mathbf M_{\nu\nu}$ must be a scaling-permutation matrix and that
$\mathbf M_{\nu\iota}=\mathbf 0$.
Fix any responsive coordinate $j\in\{1,\ldots,d_\nu\}$.
By Assumption~\ref{asm:ts_isolation}, there exists an environment $\mathbf u^{(j)}$ such that
\begin{equation}
z_{\nu,j}^{t}=c_j(\mathbf u^{(j)})+n_{\nu,j}^{t},
\label{eq:ts_isolated_j}
\end{equation}
i.e., the parent contribution to $z_{\nu,j}^{t+1}$ is removed.
Under this environment, Eq.~\eqref{eq:ts_h_affine} implies
\begin{equation}
z_{\nu,j}^{t}
=
\mathbf e_j^\top \mathbf M_{\nu\nu}\hat{\mathbf z}_\nu^{t}
+
\mathbf e_j^\top \mathbf M_{\nu\iota}\hat{\mathbf z}_\iota^{t}
+
c'_j,
\label{eq:ts_znu_j_affine}
\end{equation}
for some constant $c'_j$.
On the other hand, by Eq.~\eqref{eq:ts_isolated_j} and the innovation identifiability
in Eq.~\eqref{eq:ts_id_n_nu}, $z_{\nu,j}^{t}$ depends on exactly one independent Gaussian
innovation coordinate (up to an affine transform).
If either $\mathbf M_{\nu\nu}$ had more than one nonzero entry in its $j$-th row,
or $\mathbf M_{\nu\iota}\neq\mathbf 0$, then the right-hand side of
Eq.~\eqref{eq:ts_znu_j_affine} would depend on multiple independent Gaussian sources,
contradicting Eq.~\eqref{eq:ts_id_n_nu}.
Therefore, the $j$-th row of $\mathbf M_{\nu\nu}$ has exactly one nonzero entry and
$\mathbf M_{\nu\iota}=\mathbf 0$.
Since $j$ was arbitrary, $\mathbf M_{\nu\nu}$ must be a scaling-permutation matrix,
which establishes
\begin{equation}
\mathbf z_\nu^t=\mathbf P_\nu\hat{\mathbf z}_\nu^t+\mathbf c_\nu.
\label{eq:ts_id_nu_final}
\end{equation}

\paragraph{Step III-c: Linear block identifiability of $\mathbf z_\iota^t$.}
Finally, we consider the invariant block.
By Assumption~\ref{asm:ts_align}, the invariant representation $\hat f_\iota(\mathbf x^t)$
is independent of $\mathbf u$ and contains no $\nu$-information, which implies
$\mathbf M_{\iota\nu}=\mathbf 0$ in Eq.~\eqref{eq:ts_M_blocks}.
Together with the affine relation in Eq.~\eqref{eq:ts_h_affine} and the
block-wise affine identifiability of $\mathbf n_\iota^{t}$ in Eq.~\eqref{eq:ts_id_n_iota},
the same Gaussian-to-Gaussian argument as in Step~II-b implies that
$\mathbf M_{\iota\iota}$ must be nonsingular.
Hence there exist a nonsingular matrix $\mathbf A_\iota$ and a vector $\mathbf c_\iota$ such that
\begin{equation}
\mathbf z_\iota^t=\mathbf A_\iota\hat{\mathbf z}_\iota^t+\mathbf c_\iota,
\label{eq:ts_id_iota_final}
\end{equation}
which completes the proof.
\end{proof}

\section{Training Step Overview}
\label{app:alg_overview}

\Cref{alg:citevae_simple} summarizes one training step of \textsc{CITE-VAE}.
Given a perturbed pseudo-transition $\mathbf{x}_{t:t+1}^{(\mathbf{u})}$ and a matched control anchor
$\mathbf{x}_{t:t+1}^{(\mathbf{0})}$, we encode each snapshot into invariant and responsive latents,
and decode each observation using the emission model $p_\theta(\mathbf{x}^t\mid \mathbf{z}^t)$.
We then optimize a unified objective consisting of: (i) a reconstruction term,
(ii) a temporal KL regularizer induced by the interventional transition prior,
(iii) an $L_2$ alignment loss enforcing invariance across matched perturbed and control samples, and
(iv) an $\ell_1$ sparsity penalty on the triangular lagged adjacency parameters.

\begin{algorithm}[t]
\caption{\textsc{CITE-VAE}: One Training Step}
\label{alg:citevae_simple}
\DontPrintSemicolon
\SetKwInOut{Input}{Input}
\SetKwInOut{Output}{Output}
\SetKw{KwRet}{return}

{\footnotesize
\Input{OT-coupled mini-batch 
$\mathcal B=\{(\xb^{(u)}_{t:t+1},\xb^{(0)}_{t:t+1},\ub)\}$; 
weights $\lambda_{\mathrm{align}},\lambda_{\mathrm{reg}}$.}
\Output{Minimization loss $\mathcal L$.}

Encode snapshots into 
$\zb_s^{(c)}=(\zb_s^{\iota,(c)},\zb_s^{\nu,(c)})$ 
for $c\in\{u,0\}$ and $s\in\{t,t+1\}$\;

Construct condition-specific causal weights 
$\mathbf W^{(c)}$ from condition embedding $\ub^{(c)}$, and parameterize
the transition prior 
$p_\psi(\zb_{t+1}\mid \zb_t,\ub^{(c)})$\;

Compute the temporal ELBO $\mathcal L_{\mathrm{temp}}$, including
snapshot reconstruction, one-step predictive reconstruction, and temporal KL terms\;

$\mathcal L_{\mathrm{align}}
\leftarrow
\sum_{s\in\{t,t+1\}}
\|\mub^{\iota,(u)}_s-\mub^{\iota,(0)}_s\|_2^2$\;

$\mathcal L_{\mathrm{reg}}
\leftarrow
\sum_{c\in\{u,0\}}\|\mathbf W^{(c)}\|_1$\;

$\mathcal L
\leftarrow
-\mathcal L_{\mathrm{temp}}
+\lambda_{\mathrm{align}}\mathcal L_{\mathrm{align}}
+\lambda_{\mathrm{reg}}\mathcal L_{\mathrm{reg}}$\;

\KwRet $\mathcal L$\;
}
\end{algorithm}

\section{Experimental Details}
\subsection{Temporal Causal3DIdent Dataset}
\label{app:causal3d}

We follow the implementation of the temporal \textsc{Causal3DIdent} benchmark used in prior work~\citep{lippe2022citris, lippe2022causal} to generate synthetic sequences. All implementation choices are limited to the data generation process and do not affect the proposed model or its training procedure. The benchmark provides ground-truth latent causal variables with seven factors:
(i) object position as a three-dimensional vector $[x,y,z] \in [-2,2]^3$;
(ii) object rotation as two angles $[\alpha,\beta] \in [0,2\pi)^2$;
(iii) the hue of the object, background, and spotlight, each in $[0,2\pi)$;
(iv) spotlight rotation in $[0,2\pi)$; and
(v) object shape as a categorical variable with seven possible values.

Causal relations among the factors are predefined and time-invariant, yielding a first-order Markov process.
Each continuous factor follows a Gaussian transition model whose mean is a nonlinear function of its parent variables.
This construction induces strong dependencies among factors across time. Interventions are encoded by a binary vector $\mathbf{u}\in\{0,1\}^K$ indicating which factors are targeted.
In our setup, $\mathbf{u}$ remains constant within each sequence.

\subsubsection{Analogy between synthetic interventions and single-cell perturbations.}
\label{app:analogy}
\Cref{app:synthetic} summarizes the conceptual correspondence between the synthetic 3D intervention setting and single-cell perturbation experiments.

\begin{table}[h]
\centering
\small
\begin{tabular}{ll}
\toprule
\textbf{Synthetic 3D environment} & \textbf{Single-cell perturbation} \\
\midrule
Single rendered object & Single cell \\
Object shape (fixed) & Cellular context \\
Object position / rotation & Perturbation-responsive gene program \\
Controlled factor intervention & Perturbation \\
Invariant visual attributes & Background cellular program \\
Temporal factor evolution & Post-perturbation response trajectory \\
\bottomrule
\end{tabular}
\caption{Conceptual analogy between the synthetic benchmark and single-cell perturbation experiments.}
\label{app:synthetic}
\end{table}

\subsubsection{Evaluation Protocol for Manifold Recovery.}
To quantify how well \textsc{CITE-VAE} recovers the ground-truth generative factors, we adopt a standard nonlinear-probe evaluation.
Specifically, we fit a lightweight MLP regressor that maps the learned latent blocks $(\hat{\mathbf z}_\nu,\hat{\mathbf z}_\iota)$ to the ground-truth factor space.
Before fitting, both the latent representations and ground-truth factors are standardized using a global affine transform computed on the training split.
The probe is trained post hoc and evaluated on a held-out test set; no gradients are propagated through the representation model.
We report the coefficient of determination ($R^2$) and Spearman rank correlation between predicted and true factors.
This protocol measures the decodability and geometric fidelity of the learned representation while being insensitive to benign invertible reparameterizations of the latent space.

\subsection{hiPSC Dataset}
\label{app:hispc}
\subsubsection{Longitudinal Single-Cell Perturbations.}
We use the longitudinal hiPSC single-cell CRISPR perturbation dataset~\citep{ishikawa2023renge}, containing 25,293 cells measured at four discrete time points ($t\in\{2,3,4,5\}$) under 23 single-gene knockouts plus a shared control.
Control cells are identified by the provided ($N_{\text{ctrl}}=1, 655$).

\subsubsection{Preprocessing.}
We apply library-size normalization (10k counts per cell) followed by a \texttt{log1p} transformation. Unless otherwise specified, we restrict training to the top 1,000 highly variable genes (HVGs). To avoid feature-selection leakage, HVGs are selected using training data only and the same gene set is used for validation and test. Intervention labels are obtained from \texttt{target\_gene}, and cells with missing intervention annotations are excluded.

\subsubsection{Discrete-time transition construction (independent coupling).}
Because continuous single-cell trajectories are unobserved, we approximate discrete-time transitions using an \emph{independent coupling} between adjacent snapshots. For each intervention condition and adjacent time pair ($t\!\to\!t{+}1$), we randomly sample cells from time $t$ and time $t{+}1$ to form training tuples $(x_t, x_{t+1}, u)$. This construction trains the model to map the population distribution at time $t$ to that at time $t{+}1$ without relying on any ground-truth cell-to-cell correspondence. Random sampling is performed with a fixed seed for reproducibility.

\subsubsection{Leave-One-Intervention-Out (LOO)}
We evaluate OOD generalization via leave-one-intervention-out: for each held-out KO gene $k$, we train on all remaining perturbations (including the shared control) and test exclusively on $k$, reporting results averaged over all held-out genes. When evaluating on DE gene subsets, DE genes are computed \emph{within each LOO training fold only} using training interventions and control, strictly excluding the held-out intervention to prevent leakage.

\subsection{Metrics.}
\label{app:metrics}

Metrics are computed on \emph{pseudobulk} profiles by averaging cells within each perturbation--time condition $(\mathbf u,t)$.
Pseudobulk evaluation is standard in perturbation prediction and provides a comparable target across methods that produce condition-level outputs.
In our setting, it additionally tests whether a cell-level model can faithfully capture population-level perturbation effects.

\subsubsection{$R^2$}
On real single-cell datasets, we report $R^2$ at the pseudobulk level.
For each condition $(\mathbf u,t)$, let $\mathcal I_{\mathbf u,t}$ denote the index set of observed cells.
We form the mean observed expression
\begin{equation}
\bar{\mathbf x}^{t,(\mathbf u)}
=
\frac{1}{|\mathcal I_{\mathbf u,t}|}
\sum_{i\in\mathcal I_{\mathbf u,t}}
\mathbf x^{t,(\mathbf u),(i)},
\end{equation}
and the mean predicted expression
\begin{equation}
\bar{\hat{\mathbf x}}^{t,(\mathbf u)}
=
\frac{1}{|\mathcal I_{\mathbf u,t}|}
\sum_{i\in\mathcal I_{\mathbf u,t}}
\hat{\mathbf x}^{t,(\mathbf u),(i)}.
\end{equation}
Let $\mathcal C$ be the set of test conditions $(\mathbf u,t)$. We compute
\begin{equation}
R^2
=
1
-
\frac{
\sum_{(\mathbf u,t)\in\mathcal C}
\left\|
\bar{\hat{\mathbf x}}^{t,(\mathbf u)}
-
\bar{\mathbf x}^{t,(\mathbf u)}
\right\|_2^2
}{
\sum_{(\mathbf u,t)\in\mathcal C}
\left\|
\bar{\mathbf x}^{t,(\mathbf u)}
-
\bar{\bar{\mathbf x}}
\right\|_2^2
},
\end{equation}
where
\begin{equation}
\bar{\bar{\mathbf x}}
=
\frac{1}{|\mathcal C|}
\sum_{(\mathbf u,t)\in\mathcal C}
\bar{\mathbf x}^{t,(\mathbf u)}
\end{equation}
denotes the global mean pseudobulk expression across test conditions.

\subsubsection{$\Delta$ Pearson}
Let $\bar{\mathbf x}^{t,(\mathbf 0)}$ denote the mean control pseudobulk expression at time $t$.
For each cell $i$ under condition $(\mathbf u,t)$, we define the control-centered shifts
\begin{equation}
\Delta \mathbf x^{t,(\mathbf u),(i)}
=
\mathbf x^{t,(\mathbf u),(i)}
-
\bar{\mathbf x}^{t,(\mathbf 0)},
\qquad
\Delta \hat{\mathbf x}^{t,(\mathbf u),(i)}
=
\hat{\mathbf x}^{t,(\mathbf u),(i)}
-
\bar{\mathbf x}^{t,(\mathbf 0)}.
\end{equation}
We compute the cell-level Pearson correlation
\begin{equation}
\rho^{t,(\mathbf u),(i)}
=
\mathrm{corr}\!\left(
\Delta \hat{\mathbf x}^{t,(\mathbf u),(i)},
\Delta \mathbf x^{t,(\mathbf u),(i)}
\right),
\end{equation}
and report
\begin{equation}
\Delta\mathrm{Pearson}
=
\mathbb E_{(\mathbf u,t)}
\mathbb E_{i\sim \mathrm{Unif}(\mathcal I_{\mathbf u,t})}
\!\left[
\rho^{t,(\mathbf u),(i)}
\right].
\end{equation}

\subsubsection{MAE}
For each condition $(\mathbf u,t)$, let $\bar{\mathbf x}^{t,(\mathbf u)}$ and $\bar{\hat{\mathbf x}}^{t,(\mathbf u)}$
denote the observed and predicted pseudobulk profiles as defined above.
We define the control-centered perturbation effects
\begin{equation}
\Delta\bar{\mathbf x}^{t,(\mathbf u)}
=
\bar{\mathbf x}^{t,(\mathbf u)}
-
\bar{\mathbf x}^{t,(\mathbf 0)},
\qquad
\Delta\bar{\hat{\mathbf x}}^{t,(\mathbf u)}
=
\bar{\hat{\mathbf x}}^{t,(\mathbf u)}
-
\bar{\mathbf x}^{t,(\mathbf 0)}.
\end{equation}
We compute condition-level MAE across genes:
\begin{equation}
\mathrm{MAE}^{t,(\mathbf u)}
=
\frac{1}{G}
\sum_{g=1}^{G}
\left|
\Delta\bar{\hat{x}}^{t,(\mathbf u)}_{g}
-
\Delta\bar{x}^{t,(\mathbf u)}_{g}
\right|,
\end{equation}
and report the average of $\mathrm{MAE}^{t,(\mathbf u)}$ over test conditions (and timepoints, when applicable).

\paragraph{DE Gene Identification (AUC-ROC, AUPRC)}
Let $\bar{\mathbf x}^{t,(\mathbf u)}$ and $\bar{\mathbf x}^{t,(\mathbf 0)}$ denote the observed pseudobulk profiles
under perturbation $\mathbf u$ and control at time $t$, respectively.
We define the ground-truth absolute perturbation effect
\begin{equation}
\boldsymbol\delta^{\mathrm{true}}
=
\left|
\bar{\mathbf x}^{t,(\mathbf u)}
-
\bar{\mathbf x}^{t,(\mathbf 0)}
\right|
\in\mathbb R^{G}.
\end{equation}
We label genes as differentially expressed (DE) by selecting the top-$K$ genes with the largest entries in
$\boldsymbol\delta^{\mathrm{true}}$:
\begin{equation}
y_g
=
\mathbb I\!\left[
\delta^{\mathrm{true}}_g
\ge
\mathrm{Top}\text{-}K\!\left(\boldsymbol\delta^{\mathrm{true}}\right)
\right],
\qquad g=1,\dots,G.
\end{equation}
Given predicted pseudobulk profiles $\bar{\hat{\mathbf x}}^{t,(\mathbf u)}$, we compute the predicted absolute perturbation effect
\begin{equation}
\boldsymbol\delta^{\mathrm{pred}}
=
\left|
\bar{\hat{\mathbf x}}^{t,(\mathbf u)}
-
\bar{\mathbf x}^{t,(\mathbf 0)}
\right|.
\end{equation}
Treating $y_g$ as binary labels and $\delta^{\mathrm{pred}}_g$ as confidence scores, we report:
\begin{itemize}
\item \textbf{AUC-ROC}: area under the ROC curve for distinguishing DE vs.\ non-DE genes;
\item \textbf{AUPRC}: area under the precision--recall curve, reflecting performance under class imbalance.
\end{itemize}
Both metrics are computed at the gene level and averaged across test perturbations
(and timepoints, when applicable).

\subsection{Ablation on alignment and sparsity regularization.}

Our full objective combines the temporal loss with two auxiliary components, namely an alignment term and a sparsity regularization term. To evaluate the contribution of these components, we perform a controlled ablation study in which we remove the alignment term, the sparsity term, or both, while keeping the data processing, model architecture, and training protocol fixed. This allows us to test whether the performance gains of \textsc{CITE-VAE} arise from the temporal objective alone or from the joint effect of temporal modeling, alignment, and sparsity regularization.
\begin{table}[!t]
\centering
\scriptsize
\setlength{\tabcolsep}{3pt}
\renewcommand{\arraystretch}{1.08}
\caption{Ablation of the alignment and sparsity regularizers on the hiPSC benchmark.}
\label{tab:ablation_hiPSC}

\resizebox{\columnwidth}{!}{
\begin{tabular}{lccc|ccc}
\toprule
Setting
& \multicolumn{3}{c|}{All genes}
& \multicolumn{3}{c}{DE genes} \\
\cmidrule(lr){2-4}
\cmidrule(lr){5-7}
& RMSE $\downarrow$ & $R^2 \uparrow$ & MAE $\downarrow$
& $\Delta$ Pearson $\uparrow$ & AUC $\uparrow$ & AUPRC $\uparrow$ \\
\midrule
\textsc{\textbf{CITE-VAE}}
& \textbf{0.086$\pm$0.054} & $>0.95$ & \textbf{0.033$\pm$0.015}
& \textbf{0.691$\pm$0.007} & 0.931$\pm$0.020 & \textbf{0.617$\pm$0.061} \\

\textsc{w/o alignment}
& 0.116$\pm$0.043 & $>0.95$ & 0.048$\pm$0.014
& 0.389$\pm$0.014 & 0.938$\pm$0.017 & 0.547$\pm$0.063 \\

\textsc{w/o sparsity}
& 0.111$\pm$0.043 & $>0.95$ & 0.046$\pm$0.013
& 0.399$\pm$0.016 & \textbf{0.943$\pm$0.012} & 0.559$\pm$0.059 \\

\textsc{w/o align. \& sparse.}
& 0.108$\pm$0.043 & $>0.95$ & 0.044$\pm$0.013
& 0.385$\pm$0.015 & 0.940$\pm$0.015 & 0.537$\pm$0.052 \\
\bottomrule
\end{tabular}
}

\end{table}

Table~\ref{tab:ablation_hiPSC} shows that the full \textsc{CITE-VAE} objective consistently performs best, with the clearest gains appearing on DE-focused metrics such as $\Delta$ Pearson and AUPRC, which are most sensitive to perturbation-responsive effects. In contrast, removing alignment, sparsity, or both leads to very similar performance across most metrics, especially on the DE-related evaluations. This pattern suggests that the benefit of these two terms is not simply additive.

We do not interpret alignment and sparsity as two independent performance boosters. Rather, their interaction is induced by the coupled latent architecture of \textsc{CITE-VAE}. In our model, the responsive state $\mathbf{z}^t_\nu$ is explicitly conditioned on the invariant state $\mathbf{z}^t_\iota$ through the latent transition mechanism. As a result, alignment and sparsity act on different but interdependent aspects of the same latent causal system: the alignment term stabilizes the separation between invariant and responsive factors, while the sparsity term constrains the responsive transition mechanism learned on top of that separation. Under this hierarchical structure, removing either one is already sufficient to drive optimization toward a similar suboptimal regime, which explains why the three ablated variants nearly overlap.

Overall, the ablation supports the view that the two regularizers play a cooperative, non-additive role. Their main effect is not to provide separable linear improvements, but to jointly stabilize the latent causal decomposition required for robust OOD perturbation prediction.

Interestingly, the difference is much less pronounced on coarse global metrics such as pseudobulk $R^2$, but becomes substantial on DE-oriented metrics. This further suggests that the main role of alignment and sparsity is not merely to improve global reconstruction fidelity, but to preserve the structure of perturbation-responsive mechanisms.

\subsection{Auxiliary benchmark: E-MTAB-14065.}

\begin{table*}[t]
\caption{Results on \texttt{E-MTAB-14065} (LOO mode  ). Metrics are reported as mean $\pm$ standard deviation across 3 random seeds.}
\centering
\scriptsize
\setlength{\tabcolsep}{2pt}
\renewcommand{\arraystretch}{1.05}
\begin{adjustbox}{width=0.75\textwidth,center}
\begin{tabular}{l|ccc|ccc}
\toprule
Model
& \multicolumn{3}{c|}{All genes}
& \multicolumn{3}{c}{DE genes} \\
\cmidrule(lr){2-4}
\cmidrule(lr){5-7}
& RMSE $\downarrow$ & $R^2 \uparrow$ & MAE $\downarrow$
& $\Delta$ Pearson $\uparrow$ & AUC-ROC $\uparrow$ & AUPRC $\uparrow$ \\
\midrule
\textbf{ElasticNet}~\citep{zou2005regularization}
& 0.7075$\pm$0.0000 & 0.7212$\pm$0.0000 & 0.5502$\pm$0.0000
& 0.7354$\pm$0.0003 & 0.8662$\pm$0.0012 & 0.3846$\pm$0.0008 \\

\textbf{scGPT}~\citep{cui2024scgpt}
& 0.6660$\pm$0.0315 & 0.9253$\pm$0.0649 & 0.5029$\pm$0.0230
& 0.7199$\pm$0.0634 & 0.8387$\pm$0.0083 & 0.2997$\pm$0.0494 \\

\textbf{UCE}~\citep{rosen2023universal}
& - & - & -
& 0.4630$\pm$0.0307 & 0.8067$\pm$0.0182 & 0.2058$\pm$0.0199 \\

\textbf{SAMS-VAE}~\citep{bereket2023modelling}
& 0.6723$\pm$0.0016 & 0.9322$\pm$0.0089 & 0.4892$\pm$0.0010
& 0.7531$\pm$0.0067 & 0.7970$\pm$0.0039 & 0.1698$\pm$0.0040 \\

\textbf{DiscrepancyVAE}~\citep{zhang2023identifiability}
& 0.8902$\pm$0.0135 & 0.8236$\pm$0.0120 & 0.7102$\pm$0.0105
& 0.3287$\pm$0.0569 & 0.7098$\pm$0.0123 & 0.2063$\pm$0.0136 \\

\textbf{RENGE}~\citep{ishikawa2023renge}
& 0.7724$\pm$0.0004 & 0.8601$\pm$0.0019 & 0.8070$\pm$0.0017
& 0.2496$\pm$0.0227 & 0.8787$\pm$0.0231 & 0.9944$\pm$0.0010 \\

\textbf{scNODE}~\citep{zhang2024scnode}
& 0.6719$\pm$0.0025 &  0.9718$\pm$0.0039 & 0.4849$\pm$0.0022
& 0.3428$\pm$0.4189 & 0.7845$\pm$0.0317 & 0.2640$\pm$0.0580 \\

\textbf{CellOT}~\citep{bunne2023learning}
& - & - & -
& 0.0784$\pm$0.0603 & 0.5616$\pm$0.0490 & 0.0710$\pm$0.0162 \\

\textbf{LFADS}~\citep{pandarinath2018inferring}
& 0.6347$\pm$0.0023 &  0.9583$\pm$0.0082 & 0.4644$\pm$0.0007
& 0.7511$\pm$0.0138 & 0.7454$\pm$0.0069 & 0.1956$\pm$0.0112 \\

\textsc{\textbf{CITE-VAE}} (ours)
& 0.6346$\pm$0.0014 &  0.9852$\pm$0.004 & 0.4712$\pm$0.0001
& 0.7510$\pm$0.0057 & 0.8303$\pm$0.0130 & 0.3061$\pm$0.0223 \\
\bottomrule
\end{tabular}
\end{adjustbox}
\label{tab:emtab14065_full_eval}
\end{table*}

We additionally evaluate our method on \texttt{E-MTAB-14065} (iPS2-10X monolayer cardiac differentiation), an inducible perturbation single-cell RNA-seq dataset generated with the iPS2-seq platform in a monolayer hiPSC cardiac differentiation system~\citep{balmas2025single}. The dataset provides measurements across discrete developmental stages (\texttt{D0}, \texttt{D2}, \texttt{D6}, and \texttt{D12}, with an additional \texttt{D23} sample available separately), together with perturbation annotations and clone identities. Compared with larger perturbation benchmarks, \texttt{E-MTAB-14065} covers fewer perturbations but offers a more structured temporal and clonal setting. We therefore use it as an auxiliary temporal benchmark for evaluating whether a model can capture perturbation-responsive developmental dynamics under substantial clone-level variation.

For preprocessing, we restrict the analysis to the main developmental time-course experiment (\texttt{exp7}) and retain the four core stages \texttt{D0}, \texttt{D2}, \texttt{D6}, and \texttt{D12}. Perturbations are represented at the gene level, and we use \texttt{SCR} as the sole control condition. We exclude \texttt{B2M} from the main evaluation protocol to maintain a single reference condition across methods. Starting from the author-provided filtered metadata and raw count matrix, we reconstruct the benchmark by aligning cell-level annotations with raw counts, while preserving clone and batch annotations. To improve the stability of temporal matching and downstream aggregate statistics, we remove highly sparse \texttt{time $\times$ perturbation $\times$ clone} strata with fewer than 20 cells.

The resulting benchmark contains 24,766 cells from 41 clones, including 23,410 perturbed cells and 1,356 control cells. The retained perturbation set consists of five target genes (\texttt{CHD7}, \texttt{GATA4}, \texttt{KMT2D}, \texttt{NKX2.5}, and \texttt{SMAD2}) together with the \texttt{SCR} control. For evaluation, we provide two aligned expression views: an all-gene matrix of size \mbox{$24{,}766 \times 17{,}780$} and an HVG-based matrix of size \mbox{$24{,}766 \times 2{,}003$}, where the HVG panel includes the selected highly variable genes together with the perturbed condition genes. For differential-expression-based evaluation, we further construct time-matched DE reference sets against \texttt{SCR}; the union DE set contains 729 genes under the top-100 protocol.

On this auxiliary benchmark in \Cref{tab:emtab14065_full_eval}, \textsc{CITE-VAE} remains competitive across both transcriptome-level reconstruction and DE-oriented metrics, even though \texttt{E-MTAB-14065} differs substantially from our main benchmark, with fewer perturbations, stronger clone-level variation, and a more structured developmental time-course design. In particular, \textsc{CITE-VAE} achieves the best RMSE among the stable baselines and remains among the strongest models on $R^2$, while maintaining low MAE. This indicates that the model transfers well to clone-aware developmental perturbation data and does not depend on a specific benchmark structure.

For DE-based evaluation, \textsc{CITE-VAE} achieves strong $\Delta$ Pearson and competitive AUC-ROC/AUPRC, although it is not uniformly best on every metric. This pattern is expected: some baselines, especially linear or ranking-oriented ones such as ElasticNet and RENGE, can perform well on discrimination-style DE metrics that emphasize gene ranking, while being less competitive in full-transcriptome reconstruction. By contrast, \textsc{CITE-VAE} provides a more balanced trade-off between global expression fidelity and perturbation-effect consistency.

We also note that some baselines were less stable in this setting. In particular, UCE and CellOT did not yield reliable transcriptome-level reconstruction metrics under our evaluation protocol, suggesting a mismatch between their modeling assumptions and the relatively small, clone-structured temporal perturbation regime of \texttt{E-MTAB-14065}. Overall, these results support the use of \texttt{E-MTAB-14065} as an auxiliary temporal benchmark and further suggest that \textsc{CITE-VAE} generalizes beyond the main dataset while preserving strong overall performance.

\end{document}